\newcommand{\RR}{\mathbb{R}}
\newcommand{\CC}{\mathbb{C}}
\newcommand{\Sph}{\mathbb{S}}
\newcommand{\Sch}{\mathcal{S}}
\newcommand{\dd}{\mathrm{d}}
\newcommand{\M}{\mathcal{M}}
\newcommand{\num}{d}
\newcommand{\F}{\mathcal{F}}
\newcommand{\G}{\mathcal{G}}
\newcommand{\X}{\mathcal{X}}
\newcommand{\infJ}{\underline{J}}
\newcommand{\mud}{\phi} 
\newcommand{\nud}{\psi} 
\DeclareMathOperator*{\argmin}{arg\,min}
\DeclareMathOperator{\im}{image}
\DeclareMathOperator{\proj}{proj}
\newcommand{\iprod}[1]{\langle#1\rangle}
\newcommand{\eps}{\varepsilon}
\newcommand{\reffig}[1]{Figure~\ref{fig:#1}}
\newcommand{\refsec}[1]{\S~\ref{sec:#1}}
\newcommand{\refprop}[1]{Proposition~\ref{prop:#1}}
\newcommand{\refthm}[1]{Theorem~\ref{thm:#1}}
\theoremstyle{plain}
\newtheorem{thm}{Theorem}
\newtheorem*{thm*}{Theorem}
\newtheorem{prop}{Proposition}
\theoremstyle{definition}
\newtheorem{dfn}{Definition}
\icmltitlerunning{How Powerful are Shallow Neural Networks with Bandlimited Random Weights?}
\begin{document}

\twocolumn[
\icmltitle{How Powerful are Shallow Neural Networks with Bandlimited Random Weights?}




\begin{icmlauthorlist}
\icmlauthor{Ming Li}{zjnu}
\icmlauthor{Sho Sonoda}{riken}
\icmlauthor{Feilong Cao}{jiliang}
\icmlauthor{Yu Guang Wang}{sjtu1,sjtu2}
\icmlauthor{Jiye Liang}{shanxi1}

\end{icmlauthorlist}

\icmlaffiliation{zjnu}{Key Laboratory of Intelligent Education Technology and Application of Zhejiang Province, Zhejiang Normal University, Jinhua, China}
\icmlaffiliation{riken}{Deep Learning Theory Team, RIKEN AIP, Tokyo, Japan}
\icmlaffiliation{jiliang}{College of Sciences, China Jiliang University, Hangzhou, China}
\icmlaffiliation{sjtu1}{Institute of Natural Sciences, Shanghai Jiao Tong University, Shanghai, China}
\icmlaffiliation{sjtu2}{School of Mathematical Sciences, Shanghai Jiao Tong University, Shanghai, China}
\icmlaffiliation{shanxi1}{Key Laboratory of Computational Intelligence and Chinese Information Processing of Ministry of Education, School of Computer and Information Technology, Shanxi University, Taiyuan, China}
\icmlcorrespondingauthor{Ming Li}{mingli@zjnu.edu.cn}
\icmlcorrespondingauthor{Sho Sonoda}{sho.sonoda@riken.jp}

\icmlkeywords{Machine Learning, ICML}

\vskip 0.3in
]



\printAffiliationsAndNotice{}  

\begin{abstract}
We investigate the expressive power of depth-2 bandlimited random neural networks. A random net is a neural network where the hidden layer parameters are frozen with random assignment, and only the output layer parameters are trained by loss minimization. Using random weights for a hidden layer is an effective method to avoid non-convex optimization in standard gradient descent learning. It has also been adopted in recent deep learning theories. Despite the well-known fact that a neural network is a universal approximator, in this study, we mathematically show that when hidden parameters are distributed in a bounded domain, the network may not achieve zero approximation error. In particular, we derive a new nontrivial approximation error lower bound. 
The proof utilizes the technique of ridgelet analysis, a harmonic analysis method designed for neural networks. This method is inspired by fundamental principles in classical signal processing, specifically the idea that signals with limited bandwidth may not always be able to perfectly reconstruct the original signal.
We corroborate our theoretical results with various simulation studies, and generally, two main take-home messages are offered: (i) Not any distribution for selecting random weights is feasible to build a universal approximator; (ii) A suitable assignment of random weights exists but to some degree is associated with the complexity of the target function.
\end{abstract}

\section{Introduction}\label{sec_intro}
In recent years, there has been a growing interest in utilizing random methods for training neural networks as they have been shown to have the potential to significantly accelerate the training process, particularly for large-scale datasets and real-time processing requirements \cite{scardapane2017randomness, cao2018review}.
In this study, we examine the capability of shallow random networks in a \emph{bandlimited} context, where the hidden parameters are confined to a specific range. Past research has, in some cases, deliberately or unintentionally limited the distribution range of parameters. For instance, uniform distributions were used due to technical limitations, or normal random vectors with insufficiently small variance were employed due to the default configurations of the software.
Analogous to the well-established principle in signal processing (or Fourier analysis) that bandlimited signals may not be able to accurately reproduce the original signal, bandlimited neural networks may not fully exhibit their function approximation capabilities. However, unlike classical Fourier analysis, the correlation between bandwidth and approximation error has yet to be clearly defined. One challenge in this area is that neural networks do not possess an orthonormal basis, but rather a frame. Through the use of ridgelet analysis, a Fourier-like analysis developed specifically for neural networks, we have derived a new lower bound for approximation error.

This study considers a shallow neural network $g_\num(x) = \sum_{j=1}^\num c_j \sigma( a_j \cdot x - b_j )$ of input $x \in \RR^m$ with activation function $\sigma$ and parameters $(a_j, b_j, c_j) \in \RR^m \times \RR \times \RR$ for each $j \in [\num]:=\{1,\ldots,d\}$,
and the random training method with two steps:
\begin{enumerate}
    \item[] \texttt{Step~I:} Randomly initialize hidden parameters $(a_j,b_j)$ to a given data-independent probability distribution $Q(a,b)$, and freeze them; then
    \item[] \texttt{Step~II:} Statistically determine output parameters $c_j$ given a dataset $D_n = \{ (x_i,y_i) \}_{i=1}^n$.
\end{enumerate}

A variety of neural network architectures have been developed that utilize random training methods, including random vector functional-link (RVFL) networks \cite{igelnik1995stochastic}, random feature expansions \cite{rahimi2008random,rahimi2008uniform, rahimi2009weighted}, random weight networks \cite{saxe2011random}, stochastic configuration networks \cite{li2019robust,li20192,wang2017robust,wang2017stochastic}, graph convolutional networks with random weighths \cite{huang2022graph}, and certain versions of over-parametrized networks \cite{Belkin2019a,Daniely2017,Ghorbani2019,Yehudai2019}.
A kernel function defined by the inner product of feature maps: $k(x,x') = \int_{\RR^m \times \RR} \sigma(a \cdot x - b) \sigma( a \cdot x' - b ) \dd Q(a,b)$ is a special case of random training methods because we can regard this as a sum of infinitely many random samples $(a,b) \sim Q$ \cite{Bach2017a,Cho2009,Suzuki2017}. On the other hand, Bayesian neural networks \cite{Neal1996} are \emph{not} strictly based on the random training method in consideration since the distribution $Q$ is a ``posterior'', which contains the information of the dataset $D_n$; nor is the lazy learning \cite{Jacot2018} since its hidden parameters are not strictly frozen.

The random training method has the remarkable trick of ``convexification''. It frees us from inevitable non-convexity in the standard gradient descent training. The non-convexity is caused by the hidden parameters $(a_j,b_j)$ (and not by theoutput parameters $c_j$). In the random training setting, we do not optimize the parameters in Step~I, but only the output parameters $c_j$ in Step~II. 
This ``randomization'' trick is beneficial both for theory and applications, {and} has recently been adopted not only in practical algorithms but also in the theoretical study of deep learning \cite{Belkin2019a,Jacot2018,louart2018random,Malach2020,pennington2017nonlinear}. While the potential of random neural networks has been explored, the expressivity of these networks has not been extensively studied. In this study, we aim to shed light on this topic by highlighting a limitation in expressivity of random nets. Our main contribution is the introduction of a new \emph{approximation lower bound} for bandlimited shallow neural networks.

\textbf{Main Theorem (simplified).}
\textit{
Let $\Omega \subset \RR^m$ be a bounded open set with smooth boundary, and put $K := \overline{\Omega}$ be its closure. Suppose $f:\Omega\to\RR$ be an $L^2$-Sobolev function on $\Omega$ with order at least $s \in (1/2,\infty]$. 
Let $\lambda > 0$, and put $V := \{ (a,b) \in \RR^m \times \RR \mid |a| \le \lambda, |b| \le \lambda \}$.
Suppose $\sigma:\RR\to\RR$ is bounded, self-admissible, and 
the constant $C_{\sigma,P} := \sup_{(a,b) \in V}\| \sigma_{a,b} \|_{L^2(K)}$ exists finite.
Consider approximating $f$ with a bandlimited shallow neural network $g_\num(x) = \sum_{j=1}^\num c_j \sigma( a_j \cdot x - b_j )$.
Then, the approximation error is lower bounded as
\begin{eqnarray*}
 &&\| f - g_\num \|_{L^2(K)}^2
    \ge \| S^*[f] \|_{L^2(V^c)}^2
    \ge \| f \|_{L^2(K)}^2 -\nonumber\\
    &&\!\!\!\!\!\!\!\!\!C_b^2 \begin{cases}
    \| f \|_{L^1(K)}^2 \| \sigma \|_{L^\infty(\RR)}^2 \lambda^m, & \!\!\!\! \lambda \in (0,\vartheta) \\
    \| f \|_{H^s(\Omega)}^2 C_{\sigma,s}^2 \!\left(\!\frac{-1}{2s} \lambda^{-2s}\! +\! \frac{2s+m}{2sm} \vartheta^{-2s} \right), & \!\!\!\! \lambda \in [\vartheta, \infty)
    \end{cases}.
\end{eqnarray*}\label{eq:lb.main.rough}
Here, $S^*$ is the adjoint of integral representation operator $S$;
and the constants $C_b, C_{\sigma,s}$ and threshold $\vartheta >0$ depend on norms of $f$, dimension $m$ and smoothness $s$. (See \reffig{lbound} for the outline, and Section \ref{sec3} for more detailed statement.)}
\begin{figure}[h]
    \vspace{-1ex}
    \centering
    \includegraphics[width=0.8\linewidth]{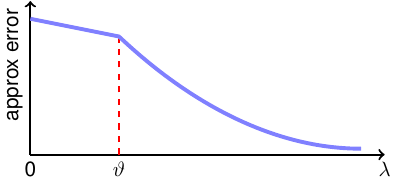}
    \vspace{-2ex}
    \caption{Outline of the approximation lower bound.}
    \label{fig:lbound}
    \vspace{-2ex}
\end{figure}

Our results, as depicted in \reffig{lbound}, demonstrate that the lower bound:
(i) holds true regardless of whether the hidden parameters are random or deterministic,
(ii) is always non-negative,
(iii) converges to $0$ as bandwidth $\lambda$ tends to $\infty$,
which agrees with the universality of the network,
(iv) is continuous at $\lambda = \vartheta$,
and (v) depends on the smoothness $s$ of target function $f$ when $\lambda \ge \vartheta$. 
This implies that the lower bound is more significant when the target function is smoother.
Thus, if the domain of hidden parameters is \emph{bandlimited}, the approximation error may not reach $0$. To better illustrate this interesting phenomenon, we provide quantitative demonstration based on two simple examples in Appendix \ref{Quantitative}. 

\textbf{Additional remarks to avoid potential confusions.}
In the \textbf{Main Theorem}, \emph{randomness is not essential/required but bandlimiting is.} Nevertheless, we focus on random nets because we can find a lot of usecases both in theorertical and practical aspects. As long as the hidden unit number is finite, random nets are always bandlimited. We may list two more examples:
(1) Random nets with fully-supported proposal distribution $Q$ (such as Gaussian), because hidden unit parameters are in a bounded domain with high probability.
(2) NNs trained by gradient descent in the lazy learning regime, because the lazy assumption (that final parameters are close to the initial parameters) means that the final parameters are distributed in a bounded domain (with high probability).


\textbf{Notation.}
For any complex number $z$, we denote by $\overline{z}$ the complex conjugate of $z$.
For any subset $A \subset X$ of 
a set $X$, $A^c (= X \setminus A)$ denotes the complement of $A$. For an activation function $\sigma$, we write $\sigma_{a,b}(x) := \sigma(a\cdot x - b)$. For any vector $x \in \RR^d$, we denote $\iprod{x} := (1+|x|^2)^{1/2}$, where $|x|$ is the Euclidean norm of $x$.
$\Sch(\RR)$ denotes the space of rapidly decreasing smooth functions, or the Schwartz test functions, on $\RR$; and $\Sch'(\RR)$ denotes the space of tempered distributions, or the topological dual space of $\Sch(\RR)$.
$H^s(\Omega)$ denotes the $L^2$-Sobolev space on open set $\Omega$ of order $s (\in \RR)$.
In order to avoid confusion, we use $\rho^\sharp(\omega) := \int_\RR \rho(t)\exp(-it\omega)\dd t$ for $1$-dimensional Fourier transform of $\rho \in L^2(\RR)$, and $\widehat{f}(\xi) := \int_{\RR^m} f(x) \exp(-i \xi \cdot x) \dd x$ for $m$-dimensional Fourier transform of $f \in L^2(\RR^m)$. By the terms `random neural networks', `random nets', or `neural nets with random weights' we mean the same thing.

\section{Integral Representation and Ridgelet Transform}\label{sec2}

In this section, we introduce a few basics of the integral representation theory and ridgelet analysis, then provide several important (known) propositions that will be used in proving our main results. In Appendix~\ref{sec:int_rep_theory}, we further supplemented the backgrounds and detailed (but quick) overview.

\subsection{Integral Representation of Neural Nets}
The integral representation is a handy tool for the analysis of neural networks with a variable number of hidden units.

Let $V$ be a Borel subset in $\RR^m \times \RR$, and 
$\M(V)$ be the space of all signed Radon measures on $V$.
We set $V$ to be a space of hidden parameters $(a,b)$, and call an element $\mu \in \M(V)$ a \emph{parameter distribution}.

Let $\sigma : \RR \to \RR$ be an \emph{activation function}. We always assume that the activation function $\sigma$ is associated with a function $\rho$ that satisfies the admissibility condition,
which is a sufficient condition for the neural network to have the universal approximation property (see \refsec{ac} and \refprop{reconst}).

\begin{dfn}
The \emph{integral representation} of a neural network is defined as an integral transform of the parameter distribution $\mu \in \M$:
\begin{align}
    S[\mu](x) := \int_{\RR^m \times \RR} \sigma(a \cdot x - b) \dd \mu(a,b), \quad x \in \RR^m.
\end{align}
\end{dfn}
We have two motivations to employ the integral representation introduced above.
First, it provides a unified expression for a variable number $\num$ of parameters including an infinite number of parameters. As the integral suggests, it is \emph{formally} an infinite version of the ordinary finite neural network. Namely, whereas the finite net  $g_\num = \sum_{j=1}^\num c_j \sigma( a_j \cdot x - b_j )$ is a weighted sum of a finite number of hidden units $\sigma(a_j \cdot x - b_j)$ with weights $c_j$ and indices $j \in [\num]$, the infinite net $S[\mu]$ is a weighted integral of an infinite number of hidden units $\sigma(a \cdot x - b)$ with weight function $\mu(a,b)$ and ``indices'' $(a,b) \in V$. Nevertheless,
we can also \emph{express a finite net} as $g_\num = S[\mu_\num]$ by letting $\mu_\num = \sum_{j=1}^\num c_j \delta_{(a_j,b_j)}$ with Dirac measures $\delta_{(a_j,b_j)}$, because we assume that a parameter distribution $\mu$ is a Radon measure, which includes both continuous densities and singular masses. In other words, the integral representation is \emph{not} a counterpart of the finite models, but it is an extension of the finite models.
Second, the map $S$ is linear in $\mu$. Since the non-linear parameters {$(a,b)$} are ``integrated out'' in the integral representation (as ``marginalized out'' in the Bayesian literature), we are now free from the non-linearity of neural networks. 

\subsection{Admissibility Condition}\label{sec:ac}
\begin{dfn}
Given an activation function $\sigma : \RR \to \CC$, we say that a function $\rho : \RR \to \CC$ is \emph{admissible} when it satisfies the \emph{admissibility condition}
\begin{align}
(2 \pi)^{m-1} \int_\RR \sigma^\sharp(\omega)\overline{\rho^\sharp(\omega)}|\omega|^{-m} \dd \omega = 1.
\end{align}
\end{dfn}
This condition seems technical, but is typical in the context of wavelet transforms (see e.g., \citet{Mallat2009}). It simply requires the $|\omega|^{-m}$-weighted inner product of $\sigma^\sharp$ and $\rho^\sharp$ to be finite (not zero nor infinite). Therefore, this is not a strong condition and we can find, in general, an infinite number of different $\rho$'s for a given $\sigma$. For example, if $\sigma$ is a gaussian, then its Fourier transform $\sigma^\sharp$ is again another gaussian, and we can find a ``family of'' particular solutions: $\rho^\sharp(\omega) = C |\omega|^m \phi^\sharp(\omega)$ with an arbitrary Schwartz function $\phi \in \Sch(\RR)$ (as long as the integral is finite and not zero) and an appropriate normalizing constant $C>0$. 
We refer to \citep[\S~6.2]{Sonoda2015acha} for more examples. Finally, when $\rho=\sigma$, we say that $\rho$ (or $\sigma$) is \emph{admissible with itself}, or \emph{self-admissible}.

\subsection{Ridgelet Transform}
The ridgelet transform $R$ is, in a nutshell, a right inverse operator to the integral representation operator $S$.
Given a function $f \in L^2(\RR^m)$, consider finding an unknown parameter distribution $\mu \in \M$ that satisfies the integral equation $S[\mu] = f$.
As we would describe later, the solution to this integral equation is not unique, and the ridgelet transform provides a particular solution to this equation.


\begin{dfn}
For every $f \in L^p(\RR^m) (p=1,2)$, the \emph{ridgelet transform} of $f$ with respect to $\rho \in \Sch(\RR)$ is given by
\begin{equation*}
    R[f](a,b) := \int_{\RR^m} f(x) \overline{\rho(a \cdot x - b)} \dd x, \quad (a,b) \in \RR^m \times \RR.
\end{equation*}
\end{dfn}
We provide two important propositions as a basic preparation for the main theoretical analysis performed in Section 3. Their proofs are reported in Appendix \ref{sec:int_rep_theory}.

\begin{prop}[Reconstruction formula] \label{prop:reconst}
Let $f \in L^p(\RR^m) (p=1,2)$.
Provided that $\rho \in \Sch(\RR)$ is admissible with an activation function $\sigma \in \Sch'(\RR)$, then we have
\begin{eqnarray*}
   S[ R[f] ](x) &=& \int_{\RR^m \times \RR} R[f](a,b) \sigma( a \cdot x - b  ) \dd a \dd b\\&= &f(x), \quad x \in \RR^m.
\end{eqnarray*}
\end{prop}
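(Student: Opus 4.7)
The plan is to verify $S\circ R = I$ by a direct Fourier computation, reducing the identity to the admissibility condition itself. The argument runs parallel to the standard continuous-wavelet reconstruction formula: the ridge direction $a\in\RR^m$ and bias $b\in\RR$ play the role of scale/location, and a single ``frequency'' variable $\omega$ appears after applying Plancherel's theorem in $b$.

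First I would write
\[
S[R[f]](x) \;=\; \int_{\RR^m}\!\!\int_{\RR}\!\!\int_{\RR^m} f(y)\,\overline{\rho(a\cdot y - b)}\,\sigma(a\cdot x - b)\,\dd y\,\dd b\,\dd a,
\]
and after (tentatively) exchanging the order of integration, evaluate the inner $b$-integral
\[
I(a,x,y) := \int_{\RR}\sigma(a\cdot x - b)\,\overline{\rho(a\cdot y - b)}\,\dd b.
\]
A translation $b\mapsto a\cdot x - b$ puts this in convolution-correlation form, and Plancherel's identity in $b$ gives
\[
I(a,x,y) \;=\; \frac{1}{2\pi}\int_{\RR}\widehat{\sigma}(\omega)\,\overline{\widehat{\rho}(\omega)}\,e^{i\omega\,a\cdot(x-y)}\,\dd\omega.
\]
Next I would perform the substitution $\xi = \omega a\in\RR^m$ in the $a$-integral, with $\dd a = |\omega|^{-m}\dd\xi$, which decouples the two variables:
\[
\int_{\RR^m} I(a,x,y)\,\dd a \;=\; \frac{1}{2\pi}\left(\int_{\RR}|\omega|^{-m}\widehat{\sigma}(\omega)\overline{\widehat{\rho}(\omega)}\,\dd\omega\right)\int_{\RR^m} e^{i\xi\cdot(x-y)}\,\dd\xi.
\]
Recognising the $\xi$-integral as $(2\pi)^m\,\delta(x-y)$ in the distributional sense, and using the admissibility condition to identify the bracketed $\omega$-integral with $(2\pi)^{-(m-1)}$, the remaining $y$-integral against $f(y)\,\delta(x-y)$ collapses to $f(x)$, yielding $S[R[f]](x)=f(x)$.

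The main obstacle is that, under the stated hypotheses, these manipulations are only formal: with $\sigma\in\Sch'(\RR)$ the symbol $\sigma(a\cdot x - b)$ need not be a pointwise function, Fubini must be replaced by pairings of tempered distributions against Schwartz test functions, and the $\xi$-integral converges only in $\Sch'$. I would handle this by regularising $\sigma$ through Schwartz mollifications $\sigma_\eps\in\Sch(\RR)$ (or equivalently by testing the identity against an arbitrary $\phi\in\Sch(\RR^m)$), carrying out the Fubini/Plancherel/substitution chain in the regularised setting where every integral is absolutely convergent thanks to the rapid decay of $\rho\in\Sch(\RR)$ and the integrability $f\in L^1\cap L^2$, and then passing to the limit by continuity of the Fourier transform on $\Sch'$ and by dominated convergence for the $\omega$-integral (which is finite by admissibility). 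A final check that the exchanges of order in the approximating family are legitimate reduces to estimating $|R[f_\eps](a,b)|$ via the Schwartz seminorms of $\rho$, which is routine; this is also where the minor technical extensions needed to cover non-integrable activations such as Tanh and ReLU (alluded to in the footnote) enter, replacing $\sigma$ by a suitable tempered-distribution regularisation.
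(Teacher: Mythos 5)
Your proposal is correct and follows essentially the same route as the paper's own proof: reduce the $b$-integral to a correlation evaluated by Fourier/Plancherel in one variable, change variables $\xi=\omega a$ to factor out the admissibility integral, and conclude by Fourier inversion. The paper carries this out for the generalized operator $S[S^*[f]]$ with $\sigma^{(k)}=\rho$ (so $|\widehat{\rho}|^2$ appears in place of $\widehat{\sigma}\overline{\widehat{\rho}}$) and skips the distributional justifications you supply, but the computation is the same.
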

We have two interpretations for Proposition~\ref{prop:reconst}.
First, recall that $S[\mu]$ represents a neural network. Then,
the reconstruction formula implies the \emph{universal approximation property}, 
because a neural network $S[\mu]$ can express any function $f \in L^p(\RR^m) (p=1,2)$ by letting $\mu = R[f]$.
Second, recall the Fourier inversion formula: $F^{-1}[\widehat{f}](x) = (2 \pi)^{-m} \int_{\RR^m} \widehat{f}(\xi) \exp(i \xi \cdot x) \dd \xi = f(x)$. Then, we can find a clear correspondence of $S$ with $F^{-1}$, $R[f]$ with $\widehat{f}$, and $\sigma(a\cdot x - b)$ with $\exp(i \xi \cdot x)$.
However, we should also remark the difference that by the non-uniqueness of admissible functions $\rho$, the ridgelet transform $R[f]$ is not unique either. This means that $R$ is \emph{not} the strict inverse operator to $S$, but only a right inverse operator to $S$.

\begin{prop}[Plancherel theorem] \label{prop:plancherel}
Let $f \in L^2(\RR^m)$.
Provided that $\sigma$ is self-admissible, 
one has
$\| R[f] \|_{L^2(\RR^m \times \RR)} = \| f \|_{L^2(\RR^m)}.$
\end{prop}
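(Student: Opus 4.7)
The plan is to use the Fourier transform in the bias variable $b$ to reduce the $L^2$ norm of $R[f]$ to a weighted $L^2$ norm of $\widehat{f}$, after which the self-admissibility condition collapses the weight to unity. This is the natural strategy because the ridgelet transform $R[f](a,b)$ looks like a convolution of $f$ in an oriented variable against $\overline{\rho(-\,\cdot\,)}$, so Fourier duality in $b$ should turn it into a multiplication operator.

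Concretely, I would fix $a \in \RR^m$ and compute the one-dimensional Fourier transform of $b \mapsto R[f](a,b)$. Exchanging the order of integration (which I would justify by first assuming $f \in \Sch(\RR^m)$ and $\rho \in \Sch(\RR)$, then extending by density), and performing the change of variables $u = a\cdot x - b$ in the inner integral, I expect to obtain
\begin{align*}
\mathcal{F}_b\bigl[R[f](a,\cdot)\bigr](\omega)
= \overline{\widehat{\rho}(\omega)}\, \widehat{f}(a\omega).
\end{align*}
Applying the one-dimensional Plancherel identity in $b$ then yields
\begin{align*}
\int_\RR |R[f](a,b)|^2 \, \dd b
= \frac{1}{2\pi} \int_\RR |\widehat{\rho}(\omega)|^2 \, |\widehat{f}(a\omega)|^2 \, \dd \omega.
\end{align*}

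Next I would integrate over $a \in \RR^m$ and substitute $\xi = a\omega$ (holding $\omega \neq 0$ fixed), whose Jacobian is $|\omega|^{-m}$. Fubini (applied in the form $\int_\RR \int_{\RR^m} = \int_{\RR^m} \int_\RR$) separates the variables, giving
\begin{align*}
\|R[f]\|_{L^2(\RR^m\times\RR)}^2
= \frac{1}{2\pi}\left( \int_\RR |\widehat{\rho}(\omega)|^2 |\omega|^{-m} \, \dd\omega \right) \|\widehat{f}\|_{L^2(\RR^m)}^2.
\end{align*}
The $m$-dimensional Plancherel identity converts $\|\widehat{f}\|_{L^2}^2 = (2\pi)^m \|f\|_{L^2}^2$, producing the prefactor $(2\pi)^{m-1} \int_\RR |\widehat{\rho}(\omega)|^2 |\omega|^{-m} \dd\omega$, which is exactly the self-admissibility integral and therefore equals $1$. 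This proves the identity.

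The main delicate point I foresee is the interchange of integrals and the $\omega = 0$ singularity in the change of variables $\xi = a\omega$: the substitution degenerates there, and the factor $|\omega|^{-m}$ diverges. Both issues are controlled by the admissibility assumption, which forces $|\widehat{\rho}(\omega)|^2 |\omega|^{-m}$ to be integrable on $\RR$ (so in particular $\widehat{\rho}$ must vanish fast enough at the origin), and by working first with Schwartz data so that all iterated integrals are absolutely convergent before extending by a density argument to $f \in L^2(\RR^m)$. Everything else is bookkeeping.
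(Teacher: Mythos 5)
Your proof is correct, and it takes a different route from the paper's. You compute $\|R[f]\|_{L^2(\RR^m\times\RR)}^2$ directly via the Fourier-slice identity $\mathcal{F}_b[R[f](a,\cdot)](\omega)=\overline{\widehat{\rho}(\omega)}\,\widehat{f}(\omega a)$, apply the one-dimensional Plancherel theorem in $b$, and then use Fubini together with the substitution $\xi=a\omega$ (Jacobian $|\omega|^{-m}$) to peel off exactly the admissibility integral $(2\pi)^{m-1}\int_\RR |\widehat{\rho}(\omega)|^2|\omega|^{-m}\,\dd\omega=1$. The paper instead derives the isometry abstractly: it first proves the reconstruction formula $S[S^*[f]]=f$, and then obtains $\|S^*[f]\|^2=\iprod{f,S[S^*[f]]}=\|f\|^2$ from the adjoint relation, identifying $S^*[f]$ with $R[f]$ in the self-admissible case. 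The two arguments share the same computational engine — the $\xi=a\omega$ change of variables and the admissibility constant appear inside the paper's proof of the reconstruction formula — but your version is self-contained, makes the role of the admissibility condition transparent, and does not require the generalized adjoint/quotient-space machinery the paper sets up for non-integrable activations; the paper's version is shorter once that machinery is in place and generalizes to the $\partial_b^k$-modified inner products. Your handling of the delicate points (absolute convergence via Schwartz data plus density, and the integrability of $|\widehat{\rho}(\omega)|^2|\omega|^{-m}$ near $\omega=0$ guaranteed by admissibility) is appropriate and, if anything, more careful than the paper's, which states the result for all $f\in L^2(\RR^m)$ without commenting that the defining integral of $R[f]$ need not converge absolutely there.
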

The Plancherel theorem plays a key role in our main results. As to be displayed in \reffig{spectrum}, a ridgelet spectrum $R[f]$ has a long tail. If the ridgelet spectrum $R[f]$ is truncated, then the Plancherel theorem implies that we cannot reconstruct $f$ without loss.

\subsection{Proof Idea Behind Main Theorem}
To enhance the readers’ understanding of our main theorem, 
we present a visual example of \emph{how real parameters are distributed}. Without this visualization, some readers may imagine/assume that neural network parameters are distributed randomly, with relatively simple structures such as uniform distribution and normal distribution. On the contrary, \emph{the parameter distribution has the structure of ridgelet transform.} With this illustration, we can intuitively/visually understand how/why the expressive power is lost when the parameter distribution is truncated to a bounded domain.

\reffig{spectrum}, produced by \citet{Sonoda2018a}, visualizes parameter distributions in two ways. 

\begin{figure}[h]
\centering
\subfigure[GD trained parameters]{\includegraphics[width=0.238\textwidth]{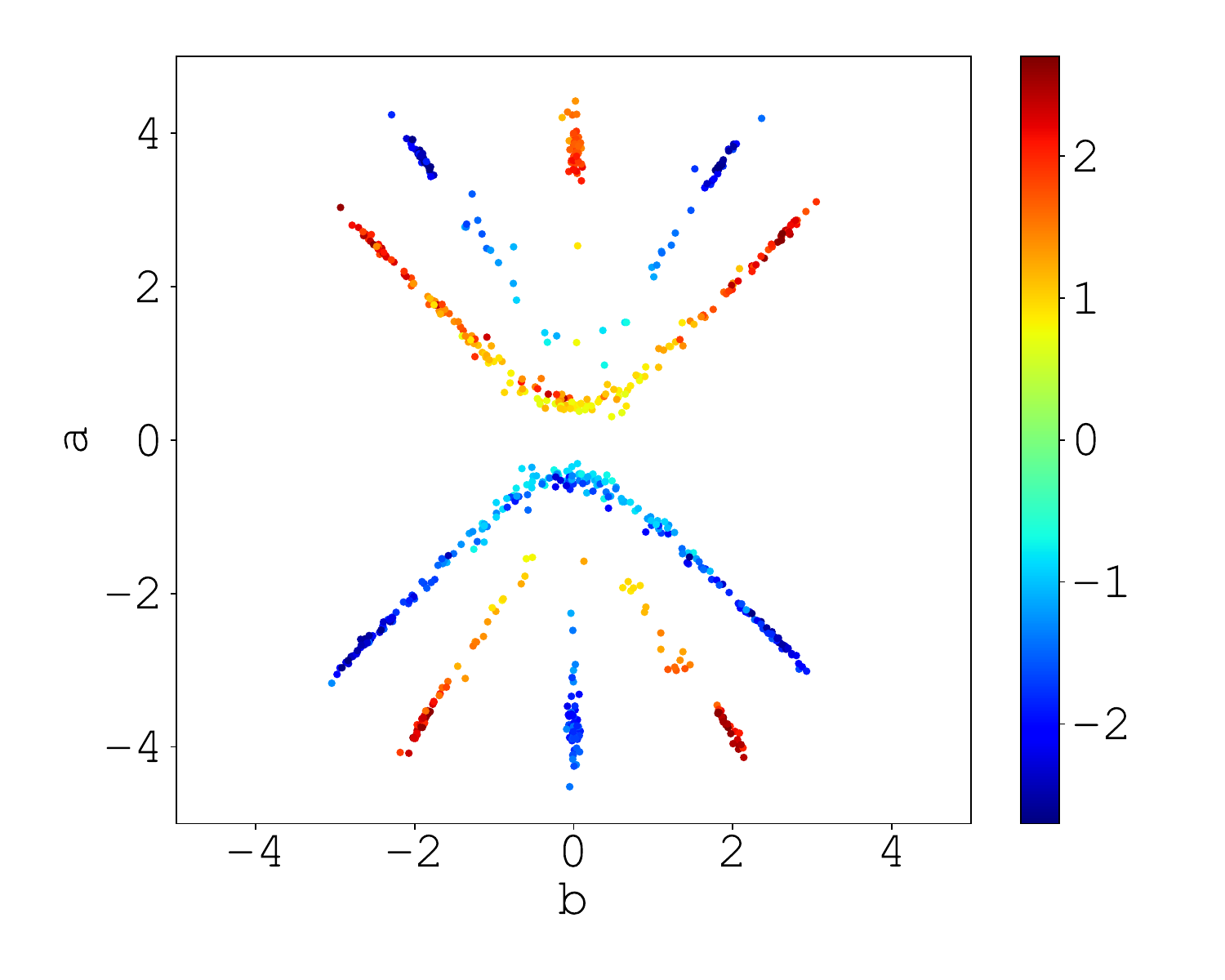}}
\subfigure[Ridgelet spectrum]{\includegraphics[width=0.238\textwidth]{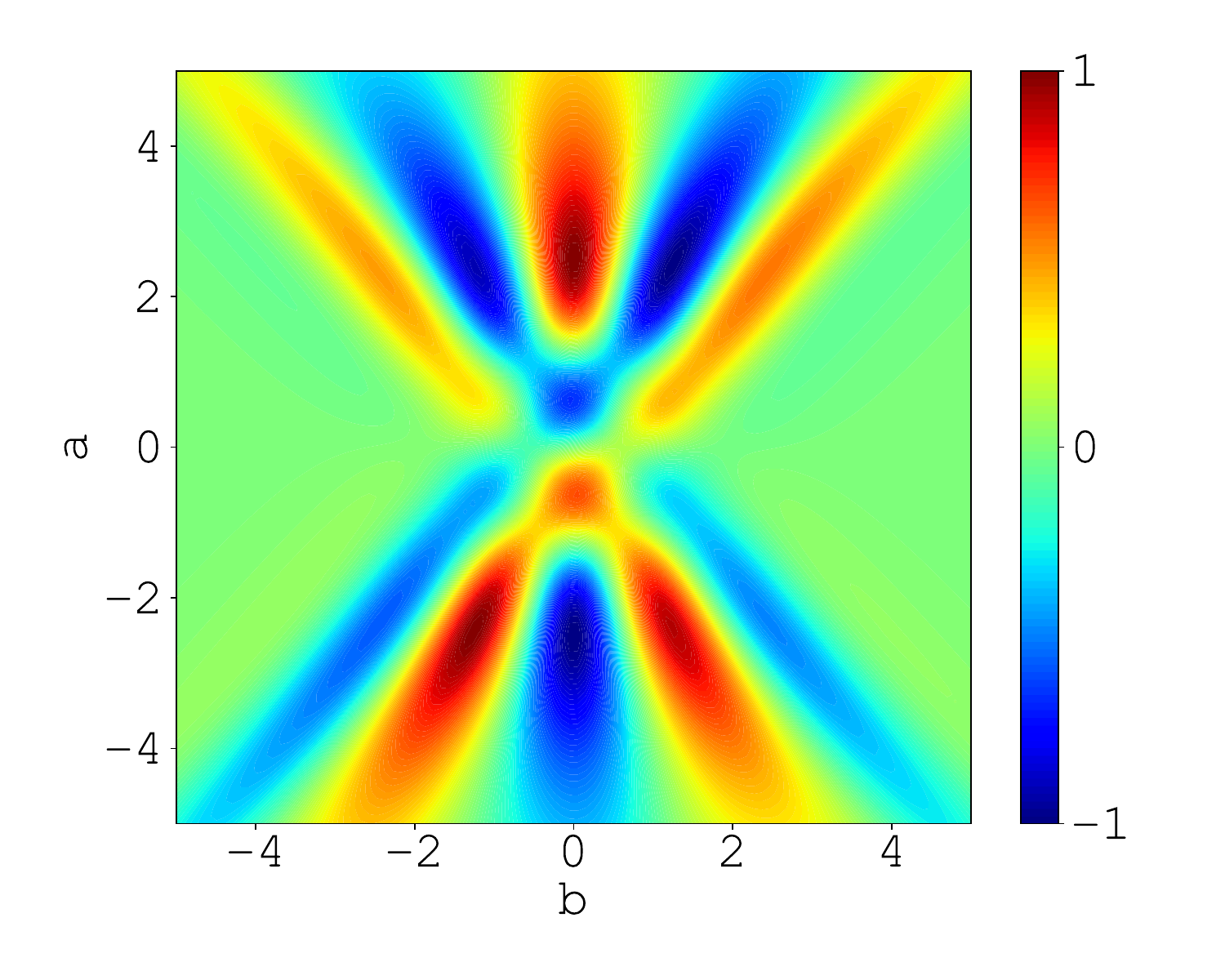}}
\vspace{-2ex}
\caption{Example of parameter distributions}\label{fig:spectrum}
\end{figure}
\vspace{-1ex}

Both figures are obtained from the common dataset $D_n = \{ ( x_i, y_i )\}_{i=1}^n$ that is generated by the function $y = f(x) = \sin 2 \pi x$. \reffig{spectrum}(a) shows the distribution of the parameters $(a_j,b_j,c_j)$, which are obtained from many neural networks trained on the common dataset $D_n$ by gradient descent (GD); and \reffig{spectrum}(b) shows the ridgelet spectrum $R[f](a,b)$ approximated by numerical integration evaluated at each point $(a,b)$.

Despite the fact that two figures are obtained from different procedures, gradient descent and numerical integration, they have an apparent intriguing resemblance. The shapes of the distributions are $10$-point star shaped. In other words, the trained parameters $(a_j, b_j, c_j)$ concentrate on high intensity areas in the ridgelet spectrum. 
This phenomenon that GD converges to the ridgelet spectrum is initially reported by \citet{Sonoda2018a}, and recently given a mathematical justification by \citet{Sonoda2021aistats}.

Based on the visualized results, one can naturally \emph{conjecture} that if the parameter space is bandlimited, that is, the spectrum is truncated to a compact domain such as $|a| \le \lambda$ and $|b| \le \kappa$, then the neural network loses the universal approximation property. In other words, there exists a class of functions that a bandlimited network cannot reconstruct. Overall, that is the primary idea behind our main theorem, i.e., we quantify and prove this conjecture by carefully estimating the tail of the ridgelet spectrum.


\section{Main Results}\label{sec3}

For theoretical analysis, we reformulate the random training method at the beginning of the introduction as follows: Let $\Omega$ be a bounded open set with smooth boundary, put $K := \overline{\Omega}$, and let $V$ be a Borel set in $\RR^m\times\RR$.
\begin{itemize}
    \item[] \texttt{Step~I':} Let $\{ (a_j,b_j) \}_{j=1}^\num$ be arbitrary $\num$ points in $V$, 
    and let $\M(\num) := \{ \sum_{j=1}^\num c_j \delta_{(a_j,b_j)} \mid c_j \in \RR \}$,
    \item[] \texttt{Step~II':} Let $\mu^\circ_\num := \argmin_{\mu \in \M(\num)} \| f - S[\mu] \|_{L^2(K)}^2$, and let $g^\circ_\num := S[\mu^\circ_\num]$.
\end{itemize}
Here, the generation process of $(a_j,b_j)$ need not be random as long as these are inside of $V$.

The main goal of this section is to lower bound the approximation error $\| f - g^\circ_{\num}\|_{L^2(K)}$. This is also a lower bound on $\| f - g_{\num}\|_{L^2(K)}$, where $g_d$ is provided by Step~II in Section \ref{sec_intro}, since $\| f - g^\circ_{\num}\|_{L^2(K)} \leq \| f - g_{\num}\|_{L^2(K)}$ by construction. Unlike the Fourier or Taylor series expansions, the rate of approximation lower bound for a finite $\num$ is unknown, and it is known as a (complicated) open question (see \citet{kainen.survey} for more details). To circumvent this difficulty, we only estimate the approximation error achieved by infinite 
minimizers, $\mu \in \M(V)$, 
which exists as a consequence of the extreme value theorem,
and lower bounds the approximation error achieved by its finite minimizers as follows:
\begin{eqnarray}
 \| f - g^\circ_\num \|_{L^2(K)}^2
    &=& \min_{\mu \in \M(\num)} \| f - S[\mu] \|_{L^2(K)}^2  \nonumber \\&\ge& \inf_{\mu \in \M(V)} \| f - S[\mu] \|_{L^2(K)}^2, \label{eq:lb.S}
\end{eqnarray}
where the inequality above is an immediate consequence of the inclusion $\M(\num) \subset \M(V)$.
Namely, just contrary to the case of estimating upper bounds, the lower bound for more expressive (=infinite) networks is automatically valid for less expressive (=finite) networks.

In \refthm{minimizer}, we show that the infinite minimum is lower bounded by the tail part of the ridgelet spectrum.
\begin{thm} \label{thm:minimizer}
Let $f \in L^2(K)$ be a square-integrable function that is supported in the compact domain $K$. Assume that $\sigma$ is self-admissible, and that the constant $C_{\sigma,P} := \sup_{(a,b) \in V}\| \sigma_{a,b} \|_{L^2(K)}$ exists finite.
Then, the approximation error 
is lower bounded 
as follows:
\begin{align}
    &\inf_{\mu \in \M(V)} \| f - S[\mu] \|_{L^2(K)}^2 \notag\\
    &\ge \| S^*[f] \|_{L^2(V^c)}^2 = \int_{V^c} |S^*[f](a,b)|^2 \dd a \dd b, \label{eq:lb.R}
\end{align}
where $S^*$ is the adjoint operator of $S$.
\end{thm}
Namely, if the tail part $S^*[f]|_{V^c}$, or the ridgelet spectrum outside the parameter domain $V$, does not vanish, then the tail bound $\| S^*[f] \|_{L^2(V^c)}^2$ inevitably lower bounds the approximation error $\| f - {g^\circ_\num} \|_{L^2(K)}$.

In order to quantify (or estimate from below) the tail bound $\| S^*[f] \|_{L^2(V^c)}$,
we exploit the property $S^*[f]=R[f]$ (valid in case of a self-admissible function $\sigma$, see Appendix \ref{sec:int_rep_theory}), and rewrite the right-hand side of \eqref{eq:lb.R} as
\begin{align*}
\| S^*[f] \|_{L^2(V^c)}^2 =\| R[f] \|_{L^2(V^c)}^2 = \|f\|_{L^2(K)}^2 - \| R[f] \|_{L^2(V)}^2.
\end{align*}
Then, the estimation problem of $\| S^*[f] \|_{L^2(V^c)}$ from \emph{below} is now turned to the estimation problem of $\|R[f]\|_{L^2(V)}$ from \emph{above}.
Thus, we can estimate the tail bound through the decay property of ridgelet spectrum, which is given by the following theorem.
\begin{thm} \label{thm:ubound}
Let $f \in H^s(\Omega)$ be an $L^2$-Sobolev function on $\Omega$ with order $s \in (1/2,\infty]$. 
Assume that $\rho \in L^\infty(\RR)$ be self-admissible. For $(r,u,b) \in \RR_+ \times \Sph^{m-1} \times \RR$, put 
\begin{align*}
    &\phi_a(ru) \!:= \!\!\min\!\left\{\!\| f \|_{L^1(K)} \| \rho \|_{L^\infty(\RR)}, C_{\rho,s} \Phi_s[f](u) r^{\frac{-2s-m}{2}} \!\right\},\\
    &\phi_b(ru,b) := |R[f;\rho](ru,b)| / \phi_a(ru) \quad (\le 1),
\end{align*}
where
$C_{\rho,s}^2 := \frac{2}{(2 \pi)^2} \int_{\RR} \iprod{\omega}^{-2s+1} |\rho^\sharp(\omega)|^2 |\omega|^{-m} \dd \omega$, which always exists finite; and
$\Phi_s[f]{(u)}^2 := \int_0^\infty \iprod{\omega}^{2s} |\widehat{f}(\omega u)|^2 \omega^{m-1} \dd \omega$, which satisfies $\int_{\Sph^{m-1}} \Phi_s[f]{(u)}^2 \dd u = \| f \|_{H^s}^2$.
Then, the ridgelet spectrum is upper bounded by
\begin{equation*}
    |R[f;\rho](ru,b)| \le \phi_a(ru), \quad (r,u,b) \in \RR_+ \times \Sph^{m-1} \times \RR.
\end{equation*}
Furthermore, when $V$ is given by a product $V_a \times V_b$ with some $V_a \subset \RR^m$ and $V_b \subset \RR$, we can decompose the integral:
\begin{equation*}
    \| R[f;\rho] \|_{L^2(V_a \times V_b)} = \| \phi_a \|_{L^2(V_a)}\| \phi_b \|_{L^2(V_b)}.
\end{equation*}
\end{thm}
Finally, by integrating the dominating functions $\phi_a$ and $\phi_b$, 
we obtain an 
estimate of the tail bound $\| R[f] \|_{L^2(V^c)}$, as follows.
\begin{thm}[Main Theorem] \label{thm:mainclaim}
    Let $f \in H^s(\Omega)$ with $s \in (1/2,\infty]$.
    Let $\lambda>0$, $V_a := \{ a \in \RR^m \mid |a| \le \lambda\}$ be a ball, $V_b \subset \RR$ be an arbitrary Borel set, and put $V=V_a \times V_b$. 
    Assume that $\sigma \in L^\infty(\RR)$ is self-admissible, and that the constant $C_{\sigma,P} := \sup_{(a,b) \in V_a \times V_b}\| \sigma_{a,b} \|_{L^2(K)}$ exists finite.
    Put $\vartheta := (m V_m \| f \|_{L^1(K)}^2 \| \rho \|_{L^\infty(\RR)}^2 / \|f \|_{H^s(\Omega)}^2 C_{\rho,s}^2)^{-1/(2s+m)}$, where $V_m := \pi^{m/2}/\Gamma(m/2+1)$ is the volume of the $m$-unit ball.
Then, for a bandlimited network $g_\num = \sum_{j=1}^\num c_j \sigma( a_j \cdot x - b_j )$ obtained by Steps~I' and II', we have the following approximation lower bounds:
\begin{eqnarray*}
&&\| f - g_\num \|_{L^2(K)}^2  \ge \inf_{\mu \in \M(V)} \| f - S[\mu] \|_{L^2(K)}^2 \nonumber \\
&&\ge \| S^*[f] \|_{L^2(V^c)}^2 = \| f \|_{L^2(K)}^2 -\| \phi_b \|_{L^2(V_b)}^2\cdot \nonumber\\
    &&\!\!\!\!\!\!\!\!\!\!\!\!\!\!\begin{cases}
    \!\| f \|_{L^1(K)}^2 \| \sigma \|_{L^\infty(\RR)} \lambda^m, &\!\!\!\! \lambda \in (0,\vartheta), \\
    \!\| f \|_{H^s(\Omega)}^2 C_{\sigma,s}^2 \!\left( \frac{-\lambda^{-2s}}{2s} \!+\! \frac{2s+m}{2sm} \vartheta^{-2s}\! \right), & \!\!\!\! \lambda \in [\vartheta,\infty).
    \end{cases} \label{eq:lb.main.rate}
\end{eqnarray*}
Here, the final bound is continuous at $\lambda = \vartheta$, always non-negative, and tends to $0$ as $\lambda \to \infty$.
\end{thm}
We provide the proofs of all the theoretical results above in Appendix~\ref{supp:proofs}.

\subsection{Technical Remarks}
\paragraph{(Un)necessity of Randomness.}
Even though our subject is random nets, we \emph{do not need any randomness in the main theorem} because the key inequality (\ref{eq:lb.S}) holds for any realization of $\mu \in \M(V)$ (besides that the function $f$ is fixed). According to Steps~I' and II', the LHS of (\ref{eq:lb.S}) is a random variable. However, the RHS is not a random variable but a constant because it is by definition smaller than any loss-value  $J(\mu) := \| f - S[\mu] \|^2_{L^2(K)}$ for $\mu \in \M(V)$. (\ref{eq:lb.R}) ($=$ RHS of (\ref{eq:lb.S})) indicates that the lower bound on $\infJ := \inf_{\mu {\in \M(V)}} J(\mu)$ is strictly positive when the ridgelet spectrum $R[f]$ is supported on a set containing the parameter domain $V$. Thus, if the proposal distribution $Q(a,b)$ (in Step~I) is supported on a compact set $V$ and $R[f]$ has support containing $V$, then inevitably $\infJ > 0$. We may consider extensions to a fully supported distribution such as the normal distribution $N$. For this case, in contrast, to extend our main result, \emph{we need some high probability condition} that initial parameters $\{(a_j,b_j)\}_{j=1}^\num$ concentrate on a certain compact domain $V$.

\paragraph{Extension to Unbounded Activation Functions such as ReLU.}
It would be possible, 
but not immediate.
The Plancherel formula (\refprop{plancherel}) is a key step to obtain the lower bound in Theorem \ref{thm:minimizer}, and 
the self-admissible assumption in \refprop{plancherel} is the main cause of the bounded assumption on activation function. Recently, \citet{sonoda2021ghosts} have extended the Plancherel formula for unbounded activation functions. Thus, it is technically possible and left for our future work to derive the lower bound for unbounded activation functions by similar arguments. 

\paragraph{Extension to Deep.}
The ridgelet theory is essentially based on the linearity of parameter distribution $\mu$ in the integral representation $S[\mu]$. But this linearity is specific to the single-hidden-layer structure. 
Namely, in a DNN such as $S[\mu_2] \circ S[\mu_1]$, the $\mu_1$ (inside the $\sigma$ of $S[\mu_2]$) is no more linear. 
Technically, we need a deep ridgelet theory, but there are no such things yet. We note that other theories based on the linearity of shallow networks, such as the mean field theory, also face to the same difficulty.

\paragraph{Verification of Tightness.}
%
In fact, the obtained lower bound is not tight, simply because NNs with $M(V)$ (infinitely many hidden units) are more expressive than NNs with $M(d)$ (at most $d$ hidden units). 
(On the other hand, for the infinitely-many-hidden-units case $M(V)$, the Pythagorean relation \eqref{eq:minimizer2} is tight.)
Nonetheless, we consider this relaxed bound meaningful because we can interpret the bound as: If the band is limited, then even if we use infinite units, the approximation power can be limited. 
%
%
\paragraph{Estimation of Upper Bound.} We consider it an out-of-scope because (1) estimating the approximation error with respect to \emph{finite} unit number $d$ \emph{with} bandlimiting assumption is another challenging problem, and (2) our focus is to present a non-trivial lower bound (since sometimes random nets are misunderstood to be always universal). In fact, before this study, there was no lower bound for a bandlimited network, even though it sounds reasonable when we consider the Fourier series expansion. And the difficulty why it has not been shown is the existence of null components, which Fourier series expansion does not hold.
For the case of \emph{finite} hidden units \emph{without} bandlimiting assumption, two types of upper bounds---the Jackson bound $O(d^{-s/m})$ and the Maurey-Jones-Barron (MJB) bound $O(1/\sqrt{d})$ were obtained by multiple authors in the 1990s. However, these upper bounds 
are in general not sharp for bandlimiting cases.

\section{Related Work and Further Remarks}
For a whole picture, we should recall the pioneering work by Barron, Theorem~6 in \cite{Barron1993}, which is a lower bound on the best approximation error for linear combinations of any \emph{fixed} basis functions:
\vspace{-2.5ex}
\begin{align*}
    \inf_{(a_j,b_j)} \sup_{C_f \le C} \inf_{c_j} \| f - g_d \|_{L^2([0,1]^m)} \ge \frac{\kappa C}{md^{1/m}},
\end{align*}
\vspace{-4ex}

where $C_f$ is a certain complexity of function $f$,
$\kappa$ is a universal constant not smaller than $1/(8\pi e^{\pi-1})$ (further refinements/improvements can also be found in \cite{gnecco2012comparison,kurkova2002comparison}), $m$ denotes the input dimension, $\num$ stands for the number of hidden neurons.
Barron's theoretical results, related to the so-called \emph{Kolmogorov width}, indicate that ``\emph{fixed basis function expansions must have a worst-case performance that is much worse than that which is proven to be achievable by certain adaptable basis function methods (such as neural nets)}.''
We note that \emph{neural nets with random frozen weights} is a special case of
\emph{fixed basis function expansion}.
However, for fixed $C$ and a given approximation error tolerance, the estimate $\kappa Cm^{-1}\num^{-1/m}$ goes to $0$ as either $m$ or $\num$ tends to infinity; in this case, the lower bound is of impractical use to show the smaller effectiveness of fixed basis function approximation.
Similarly, \citet{Yarotsky2017} considered the problem that a deep ReLU net (not random but in which all the parameters are adaptable, without any norm constraints on the weights) approximates an $L^\infty$-Sobolev function $f \in W^{s,\infty}([0,1]^m)$. Based on covering number arguments, he proved (in Theorem~5) that if a ReLU net $\epsilon$-approximates $f$ in a unit ball, i.e. $\| f \|{_{W^{s,\infty}([0,1]^m)}} \le 1$, then the ReLU net must have at least $\num_0 = c \eps^{-m/{9}s}$ units:
\vspace{-2.5ex}
\begin{align*}
    \sup_{\| f\|_{W^{s,\infty}([0,1]^m)} \le 1} \inf_{params.} \| f - g_d \|_{L^\infty([0,1]^m)} \ge \frac{C}{(md)^{9s/m}}.
\end{align*}
\vspace{-4ex}

However, this again goes to $0$ as either $m$ or $d$ tends to infinity.
The difference lies in the assumptions on the approximator $g_d$ and approximated function $f$. The Kolmogorov width considers the setting where $g_d$ is not limited and $f$ is the worst one and thus the bound is uniform, while our result considers the setting where $g_d$ is bandlimited and $f$ is an arbitrary given one and thus the bound is pointwise.
In the context of modern deep learning theory,
\citet{Yehudai2019} and \citet{Ghorbani2019} proved (under very limited settings) that the expressive power of random nets is low, while \citet{Malach2020} proved a stronger lottery ticket hypothesis, which essentially claims that the expressive power is exceptionally high. These seemingly contradictory claims are, of course, consistent.
\citet{Yehudai2019} considered the problem that a finite-dimensional random net (FRN) approximates a single ReLU neuron and provided an approximation lower bound w.h.p. for a finite number of parameters $\num$ to conclude low expressive power.
\citet{Ghorbani2019} considered the problem that an FRN approximates a quadratic function and showed that the asymptotic approximation error does not tend to zero (Theorem~1).
Namely, these two studies focused on specific examples that FRNs \emph{cannot} easily approximate.
On the other hand, \citet{Malach2020} considered the so-called student-teacher problem in which a student FRN approximates teacher FRN, and proved that if both the student and the teacher share a common norm constraint, then
the student can $\epsilon$-approximate the teacher w.h.p., which does not contradict the previous two (and our) claims because this study focused on specific examples that FRNs \emph{can} easily approximate.
\citet{hsu2021approximation} studied the approximation power of two-layer networks of random ReLUs, where both upper and lower-bounds for Lipschitz functions with explicit asymptotics were provided. However, the role of the hyper-parameter $\lambda$, determining the selection range of the random weights (and biases), is not considered in their main theorems, in contrast to our \refthm{mainclaim}. 
Compared to these results, we consider the problem in which a potentially infinite-dimensional random net approximates an $L^2$-Sobolev function $f \in H^s(\Omega)$ and provide an approximation error lower bound. Thus, our results cover a wider range of functions than previous studies.



\section{{Numerical} Experiments}\label{sec4}
In this section, we conduct some simulation studies to verify our theoretical results.
Two toy examples for 1D function regression are used in our experiments. Consistent with our theoretical analysis, the numerical simulations aim at showing how $\lambda$, which is used for the random assignment of input weights (and biases), would affect the expressive power of the random net. For this purpose, we present an intuitive illustration of the infeasibility of individual trivial settings of $\lambda$. Then, we would discuss statistically the potential relationship between $\lambda$ and the critical parameter that can determine the complexity of the target function.
We utilize the following 1D target function in the following Simulation 1 and Simulation 2.
\begin{equation*}
f(x;\sigma) \!= \!0.2\exp\!\left(\!\!-\frac{(x-0.4)^{2}}{\sigma^2}\!\right)+0.5\exp\!\left(\!\!-\frac{(x-0.6)^{2}}{\sigma^2}\!\right),
\end{equation*}
where $x\in[0,1]$, ${\sigma}>0$ is a scalar index that can determine the complexity of $f$, as mentioned in our theoretical analysis. In Simulations 1 and 2, we use the sigmoid activation function.

\textbf{Simulation 1.} We set $\sigma=0.05$ and sample 1000 instances $\{x_i,f(x_i)\}_{i=1}^{1000}$ based on equally spaced points on [0,1], then randomly and uniformly select 500 training sample and 500 test samples. We test the performance of two random networks with $\lambda=1$ and $\lambda=20$. For each case, we train the network with a different number of hidden nodes $L$, which helps with excluding the influence of $L$ to our analysis.
In \reffig{1}, we show the training and test approximation results for four different random networks, including (a) and (b) for the network built with $\lambda=1, L=100$, (c) and (d) for the network built with $\lambda=1, L=500$, (e) and (f) for the network built with $\lambda=1, L=10000$, (g) and (h) for the network built with $\lambda=20, L=200$, respectively. We observe that the random network with $\lambda=1$ cannot achieve a good approximation accuracy for this simple function approximation problem, even when the number of hidden nodes is sufficiently large. In contrast, the network with $\lambda=20$ and trained with $L=200$ demonstrates excellent learning and generalization performance. Other larger values of $\lambda$, such as $\lambda=50,100,150,200$ as we tested, have the same excellent performance on this regression task. This implies that the choice of $\lambda$ has a strong impact on the random network's expressive power, which is consistent with our theoretical results.
\begin{figure*}[htbp!]
\centering
\subfigure[{\scriptsize $\lambda=1$, $L=100$, Train}]{\includegraphics[width=0.245\textwidth]{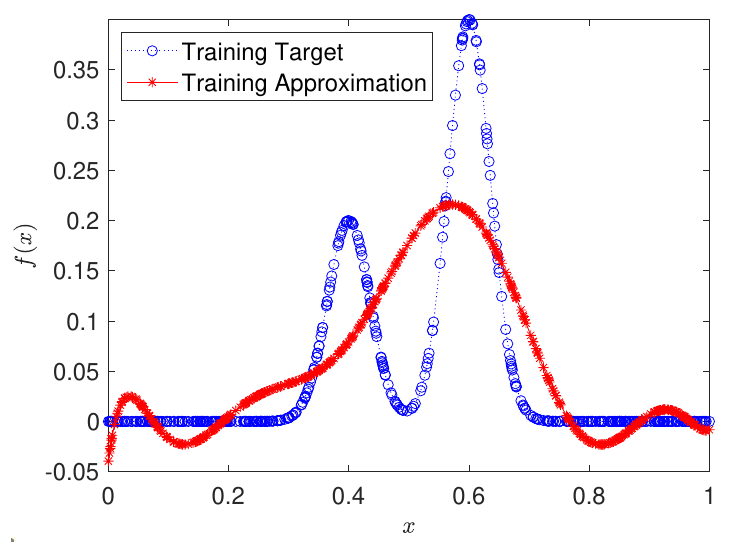}}
\subfigure[{\scriptsize$\lambda=1$, $L=100$, Test}]{\includegraphics[width=0.245\textwidth]{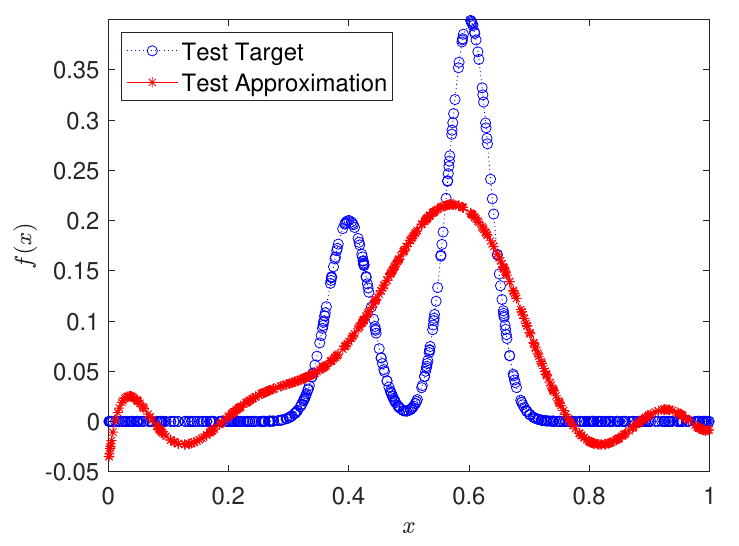}}
\subfigure[{\scriptsize$\lambda=1$, $L=500$, Train}]{\includegraphics[width=0.245\textwidth]{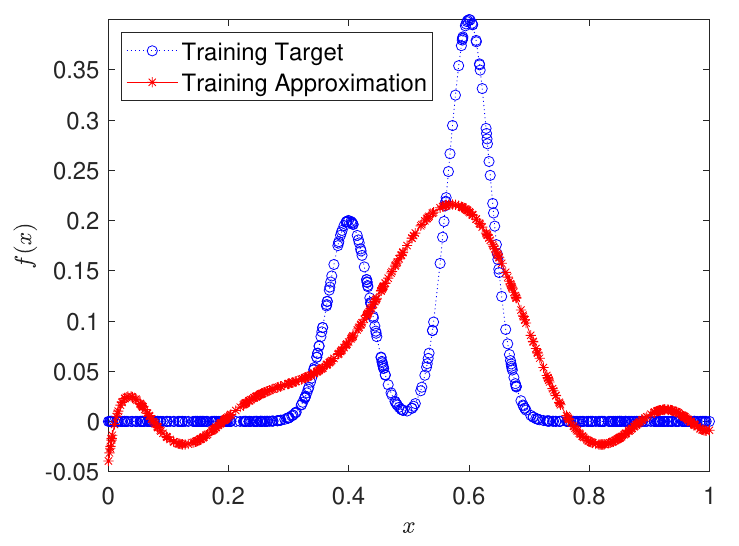}}
\subfigure[{\scriptsize$\lambda=1$, $L=500$, Test}]{\includegraphics[width=0.245\textwidth]{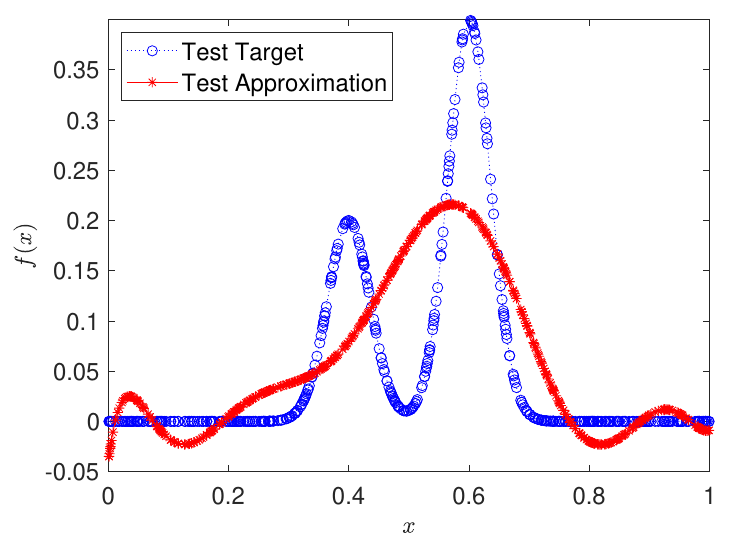}}
\subfigure[{\scriptsize$\lambda=1$, $L=10000$, Train}]{\includegraphics[width=0.245\textwidth]{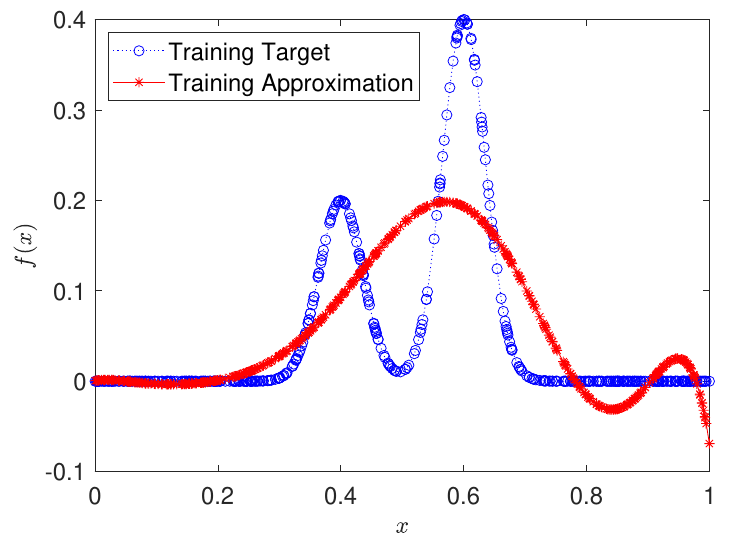}}
\subfigure[{\scriptsize$\lambda=1$, $L=10000$, Test}]{\includegraphics[width=0.245\textwidth]{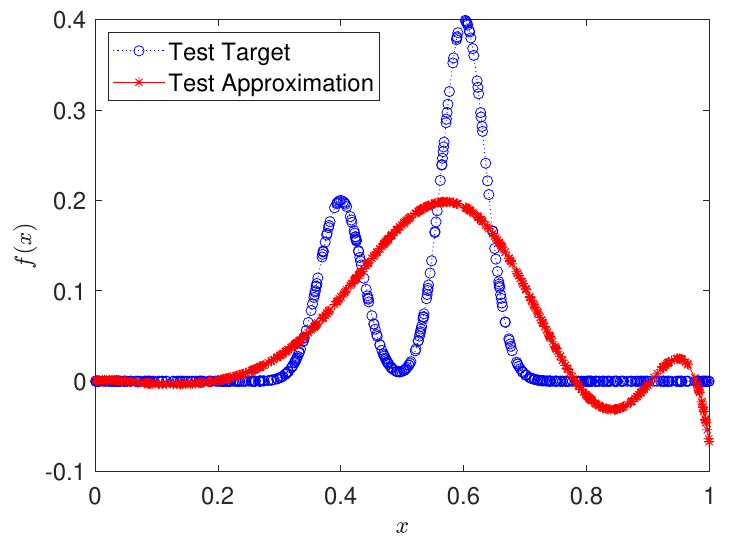}}
\subfigure[{\scriptsize$\lambda=20$, $L=200$, Train}]{\includegraphics[width=0.245\textwidth]{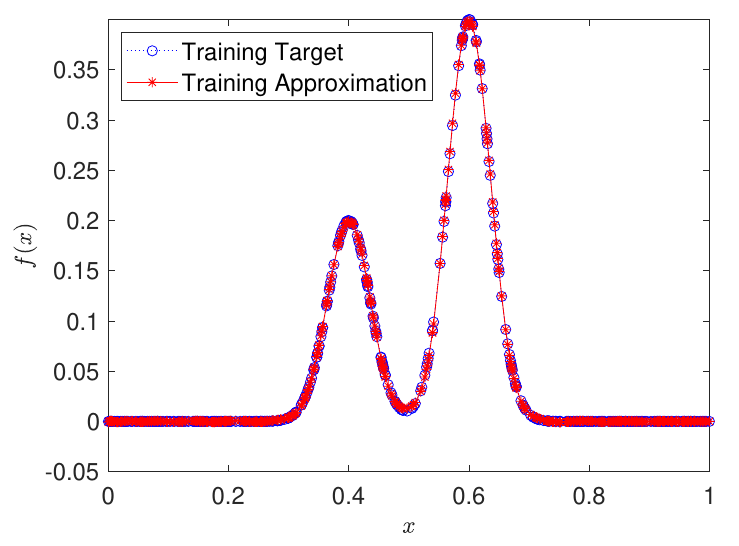}}
\subfigure[{\scriptsize$\lambda=20$, $L=200$, Test}]{\includegraphics[width=0.245\textwidth]{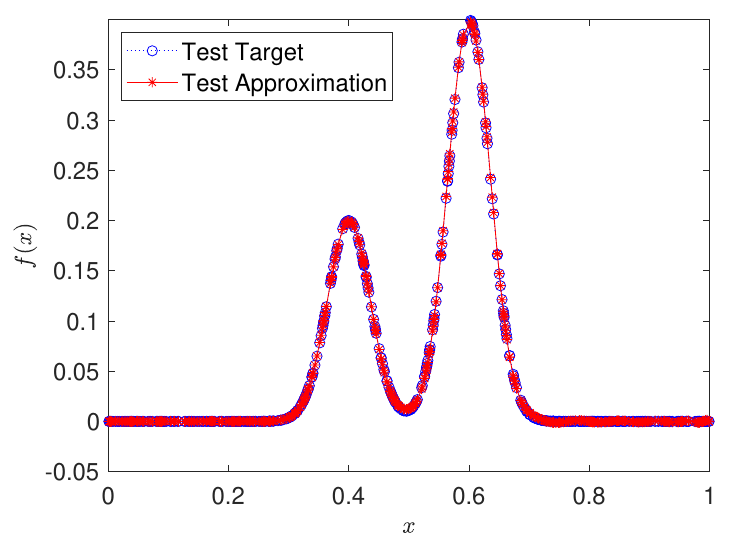}}
\caption{Performance of random nets with $\lambda=1$ and $\lambda=10$ in training and test. (a-b) $\lambda=1,L=100$. (c-d) $\lambda=1,L=500$.
 (e-f) $\lambda=1,L=10000$. (g-h) $\lambda=20,L=200$.}\label{fig:1}
\end{figure*}

\textbf{Simulation 2.} Following the intuitive investigation of the role of $\lambda$ in the expressive power of random networks in \textbf{Simulation 1}, in this part, we present more statistical results for function approximation with various pairs of ($\lambda,\sigma$) so that we can summarize a general pattern empirically. Specifically, we create different forms of target function $f(x;\sigma)$ by choosing $\sigma$ as one element of the set $\{0.01,0.05,0.1,0.5\}$, and for each regression task we build random nets with $\lambda$ taken as an element from the set $\{0.1,0.5,1,5,10,50,100,200\}$, and choose a sufficiently large $L$ (here, $L=10000$ in each case) so that we can observe the trend as $L\rightarrow +\infty$. In a similar way as in Simulation 1, we sample 1000 instances $\{x_i,f(x_i)\}_{i=1}^{1000}$ which are equally spaced points on [0,1], then randomly and uniformly select 500 training samples and 500 test samples. For each pair ($\lambda,\sigma$), we run independently 50 trials and calculate the relative training error $E_k:=\|\vec{f}-\vec{y}\|_2/\|\vec{f}\|_2$ for each trial, where $k=1,2,\ldots,50$, $\vec{f}=(f(x_1),f(x_2),\ldots,f(x_{500}))$ represents the vector of training targets, $\vec{y}=(y_1,y_2,\ldots,y_{500})$ stands for the output vector of the random network. As a matter of fact, as already shown by \reffig{1}, we only need to study the training performance to see whether a given $\lambda$ is suitable for approximating the target function produced by a given $\sigma$.
\begin{table}[htbp!]
\caption{Summary of mean relative training error for various choices of ($\lambda,\sigma$).}
\label{table:1}
\vskip 0.1in
\begin{center}
\scriptsize
\begin{tabular}{lcccc}
\toprule[1pt]
\multirow{2}{*}{$\lambda$}      & \multicolumn{4}{c}{Averaged Relative Training Error $E$} \\
\cline{2-5} & $\sigma=0.01$                     & $\sigma=0.05$                    & $\sigma=0.1$         & $\sigma=0.5$              \\
 \hline
$\lambda=0.1$                            & 0.9504                               &  0.6969                               &  0.3149          &
0.0026
\\
$\lambda=0.5$                            & 0.9299                               &  0.6627                               &  0.2179          &
1.0606e-04
\\
$\lambda=1$                              & 0.9188                               &  0.6546                               &  0.2089          &
1.1781e-05
\\
$\lambda=5$                              & 0.8574                               &  0.1263                               &  0.0016          &
\cellcolor{green!25}5.8661e-08
\\
$\lambda=10$                             & 0.5714                               &  0.0064                               &  \cellcolor{green!25}5.5692e-08         &   \cellcolor{green!25}4.5881e-08                      \\
$\lambda=50$                             & 0.0131                               & \cellcolor{green!25}4.4905e-08        &  \cellcolor{green!25}4.6897e-08         &  \cellcolor{green!25}4.5834e-08                            \\
$\lambda=100$                            & \cellcolor{green!25}1.9055e-06       & \cellcolor{green!25}7.5046e-08        & \cellcolor{green!25}7.2133e-08           & \cellcolor{green!25}6.8683e-08                            \\
$\lambda=200$                       & \cellcolor{green!25}1.1171e-07       & \cellcolor{green!25}1.3937e-07        & \cellcolor{green!25}1.0784e-07     & \cellcolor{green!25}1.1284e-07                            \\
\bottomrule[1pt]
\end{tabular}
\end{center}
\vskip -0.1in
\end{table}
Table \ref{table:1} summarizes the averaged relative training error $E:=\sum_{k=1}^{50}E_k/50$. Note that we do not provide their standard deviations here because, compared with the average values, standard deviation values may not affect the conclusion that we are aiming to verify, as we will detail later. Table~\ref{table:1} shows how the choice of $(\sigma, \lambda)$ affects the approximation ability of random networks.
From the colored cells of the table, which values are tiny (magnitude between e-8 and e-6), we can observe that, for a target function with a smaller $\sigma$ value, we would need a larger $\lambda$ for a random net to ensure a random network to achieve an accurate approximation of the target function.
From the above simulations, we can see that the effectiveness of the approximation by random networks is constrained by both the network parameter distribution and the class of target functions. For a given learning task, there exists an appropriate range/distribution $\mathcal{D}^{*}$ (not unique), but \textbf{NOT ANY} range/distribution, such that a neural network with random weights assigned from $\mathcal{D}^{*}$ can be a universal approximator (if the number of hidden nodes is sufficiently large). Second, the $\mathcal{D}^{*}$ (for example, $[-\lambda^*,\lambda^*]$) is highly dependent upon the complexity of the target function. One needs an adequate amount of samples from the target function to provide some prior knowledge or empirical studies to discover $\mathcal{D}^{*}$.

\textbf{Simulation 3.} To further reveal the infeasibility of the trivial range [-1,1] for certain function approximation problems, we conduct similar simulations on a new target function $g(x)$, denoted as
\begin{equation*}
g(x) = 0.5\cos(22\pi x^2)+0.5x^2, \:x\in[0,1].
\end{equation*}

Mathematically, $g(x)$ is composed of two parts: $0.5\cos(22\pi x^2)$ and $0.5x^2$, which represent two completely different `modes' at distinct `frequencies', as shown in Figure \ref{fig:newadded4}.
\begin{figure}[h]
 \centering
 \includegraphics[width=0.5\textwidth]{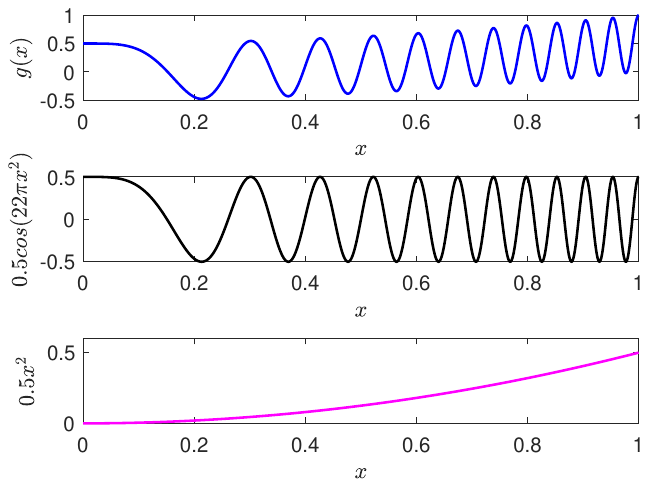}
 \vskip -0.15in
 \caption{Visualization for the target function $g(x)$ (up), its modes $0.5\cos(22\pi x^2)$ (middle) and $0.5x^2$ (bottom), respectively.}\label{fig:newadded4}
\end{figure}

We carry on the same sampling as in Simulations 1 and 2 to generate 500 training and test points on [0,1]. Here, we only consider the training performance of random nets with various choices of $\lambda$. We report the results of the comparison for $\lambda=1$ and $\lambda=100$ in \reffig{3}. We observe that the random net with $\lambda=1$ is not a universal approximator, although the number of hidden nodes is sufficiently large ($L=10,000$). The network with $\lambda=1$ can only fit the `mean' curve of the original signal and fails
to approximate the high-frequency `mode' $0.5\cos(22\pi x^2)$. On the other hand, for the second `mode' $0.5x^2$, the random net with $\lambda=1$ has great approximation performance.

As we observe the derivative  $|g'(x)|\leq 25$ in \reffig{3} (c), we conjecture that in general, the `appropriate' range of $\lambda$ is related to the magnitude of $|g'(x)|$, rather than independent of the target function class and training samples.
Moreover, a multi-scale strategy that selects random parameters from various scopes can be beneficial, especially when the target function is complicated and composed of multiple `modes.' In \reffig{3} (d), we find another interesting result that the training output of the network with 300 hidden neurons and weights (and biases) randomly chosen from [-100,100] is not significantly affected if we remove 85 hidden neurons with weights (and biases) located in the `narrow' range [-30,30]. It means, these hidden weights (and biases) as randomly assigned from [-30,30],  not to mention the ones from [-1,1], provide a little contribution to approximation universality in learning.
\begin{figure}[htbp!]
\centering
\subfigure[$\lambda=1$, $L=10000$]{\includegraphics[width=0.235\textwidth]{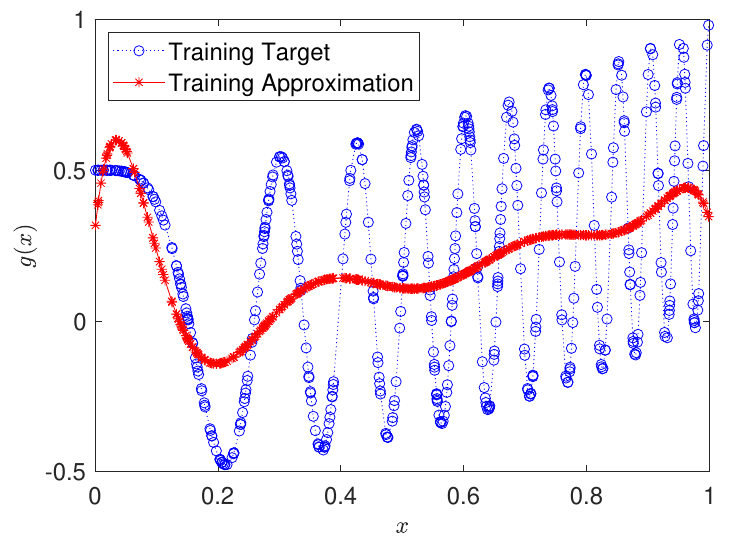}}
\subfigure[$\lambda=100$, $L=300$]{\includegraphics[width=0.235\textwidth]{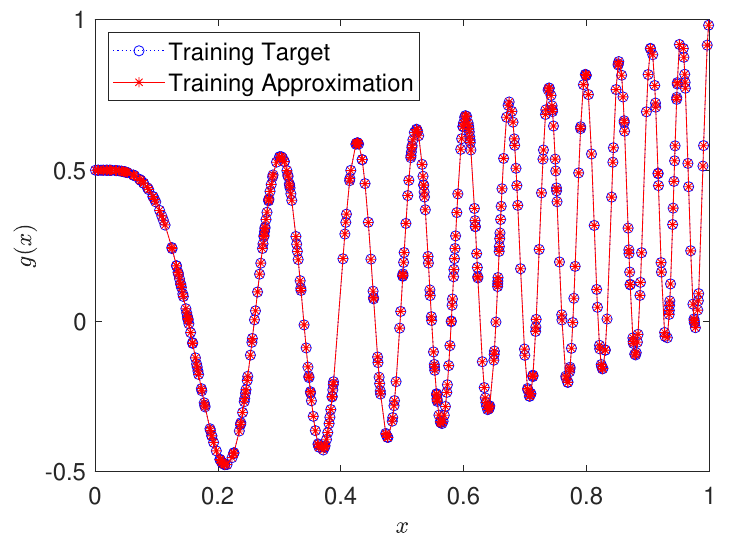}}
\subfigure[ $g'(x)$]{\includegraphics[width=0.235\textwidth]{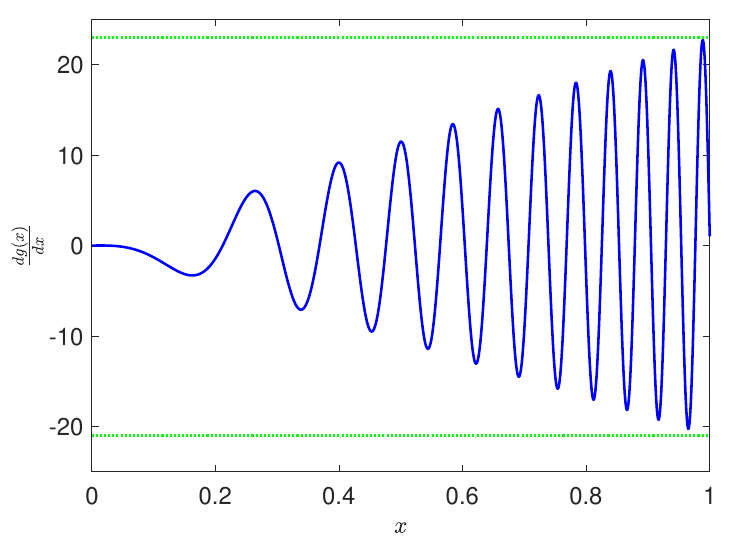}}
\subfigure[$y1$, $y2$, and $y1-y2$]{\includegraphics[width=0.235\textwidth]{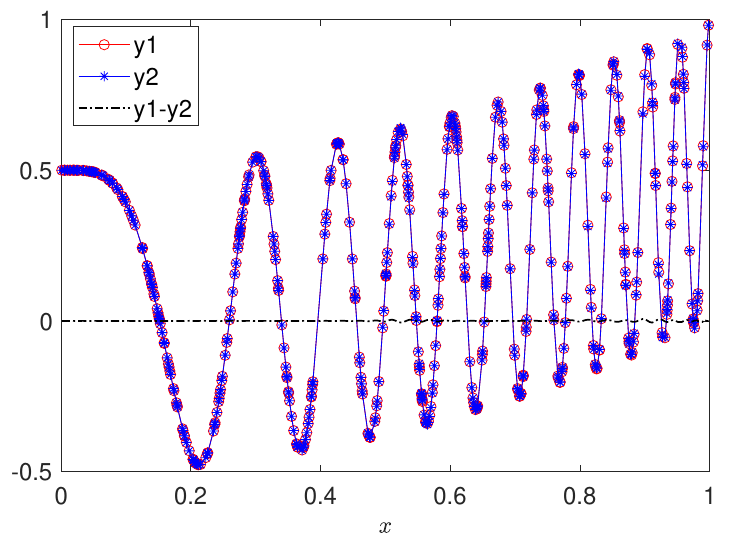}}
\caption{Performance for training results of $g(x)$: (a) $\lambda=1,L=10000$, (b) $\lambda=100, L=300$. (c) Derivative function $g'(x)$. (d) $\mathcal{N}_1$: Training approximation of $g(x)$ with hidden weights (and biases) randomly assigned from $[-100,100]$, $\mathcal{N}_2$: Training approximation of $g(x)$ with hidden weights (and biases) randomly assigned from $[-100,100]/[-30,30]$, and their numerical difference $\mathcal{N}_1-\mathcal{N}_2$.}\label{fig:3}
\end{figure}

\textbf{Two main take-home messages:} Our experiments support our theoretical results, which send two critical messages.
(1) For a learning task, simply taking a fixed scope $[-\lambda,\lambda]$
would not make random neural nets universal approximators, if $\lambda$ is not set properly. (2) For a gaussian-type target function $f(x;\sigma)=\exp(-|x|^2/\sigma^2)$ ($\sigma>0$), which is a Sobolev function and thus meets the conditions of our main theorem,
 a large value of $\lambda$ is usually needed if $\sigma$ is small. Our empirical findings provide valuable guidance for developing algorithms for constructing random neural networks. As a practical suggestion, users utilizing random networks for data modeling should be aware that the selection of the parameter $\lambda$ greatly impacts the performance of the model. To determine an appropriate value of $\lambda$, it is recommended to conduct simulations through a trial-and-error approach. While this method is relatively straightforward to implement, it relies heavily on human intervention and is not a fully automated algorithm.

Our theoretical and empirical results indicate that randomly assigning weights from a fixed range or distribution that is independent of the training samples or prior knowledge may not be the most effective approach. Instead, it is more beneficial to explore different settings of random weights from various distributions, with the goal of expanding the function basis and increasing the ability to approximate the target function. Refer to the additional information provided in Appendix~\ref{further back}. Further simulations on a 2D example and some real-world datasets are provided in Appendix \ref{supp:furtherexp-2D} and Appdendix \ref{supp:furtherexp-real}, respectively.


\section{Conclusion and Discussion}\label{sec_conc}
In this paper, we examine the lower bound on the approximation error of shallow neural networks with random weights. Specifically, we explore the impact and limitations of randomness on the network's capacity for expression. Our theoretical findings indicate that the lower bound on the approximation error of a random network may not be zero if the range/distribution of hidden parameters is not appropriately selected in advance.
Our results are based on the assumption of bandwidth limitation, which is a form of stochastic limitation that includes a finite variation, and are also valid when the proposed distribution is fully supported, such as a normal distribution.
%

Our theoretical results and empirical findings provide evidence that challenges the prevalent belief that a shallow random neural network is always a universal approximator regardless of the choice of hidden weights. This is significant as it helps researchers working with shallow neural networks and random weights to have a better understanding of the critical issues and potential drawbacks associated with randomness. Further in-depth analysis, both for deep neural networks or with tighter bounds, is expected to provide more insights. Interpretation of when and why neural networks with random weights are effective or not is essential to advance the understanding of this research topic.

\vspace{-2ex}
\section*{Acknowledgements}
Ming Li acknowledged the support from the National Natural Science Foundation of China (No. U21A20473, No. 62172370), and from the Zhejiang Provincial Natural Science Foundation (No. LY22F020004). Feilong Cao acknowledged the support from the National Natural Science Foundation of China (No. 62176244). Sho Sonoda was supported by JSPS KAKENHI 18K18113, JST CREST JPMJCR2015 and JST PRESTO JPMJPR2125. Yu Guang Wang acknowledged the support from the frontier research on the fundamentals of artificial intelligence mathematics (No. P22KN005). Jiye Liang acknowledged the support from the National Natural Science Foundation of China (No. U21A20473).

\bibliography{references}
\bibliographystyle{icml2023}

\newpage
\appendix
\onecolumn

\section{Further Background}\label{further back}
The initial motivation of this work comes from the comments posted by  Yann LeCun\footnote{\url{https://www.facebook.com/yann.lecun/posts/10152872571572143}}, where some truth background behind randomness in neural nets was briefly revisited.
We {do} not only agree with Yann's comments after we conduct a comprehensive literature review for this line of research but also technically question the feasibility and effectiveness of the ``neural nets with random weights'' (with certain controversial name/term),  since many researchers {found} empirically that, in some cases, although not always, random models with inappropriate setting of the random parameters lead to unstable or poor results .

Overall, that motivates us to investigate two pressing, however, puzzling questions: (1) Can we guarantee that {a} random neural {net} model with hidden parameters chosen from a fixed range, for example, a trivial case {obtained} by letting $\lambda=1$, is a universal approximator? (2) Given a target function $f$ with specified complexity, what is the relationship between an appropriate setting of $\lambda$ (that can lead to a universal approximator in the sense of probability) and the smoothness of $f$?  Though we raise these questions,  our intention is not to make any judgement on or get involved in the controversial name towards this direction. Instead, we present our current study along the right track of neural nets with random weights (or random neural nets, random nets), with particular concerns on the theoretical aspects, aiming to provide some new insights into answering the above questions.

The appearance of randomness in neural networks can trace back to the original Rosenblatt's perceptron \cite{rosenblatt1958perceptron}, where the first layer is randomly connected and later Minsky and Papert's Gamba perceptron \cite{minsky1988perceptrons} whose first layer is a bunch of linear threshold units. In early 1990s, researchers made random training methods/models reification, for example, {with} single hidden layer feedforward networks ({SLFNs}) with random weights \cite{Schmidt1992} and random vector functional-link (RVFL) networks \cite{Pao1994}. Algorithmically, they performed the two steps (mentioned at the beginning of the introduction section) to build the randomized learner model. However, the approximation errors of the {resulting models} are bounded in the statistical sense \cite{igelnik1995stochastic}, implying that preferable approximation performance is not guaranteed for every random assignment of the hidden parameters if the re-given probability distribution $Q(a; b)$ is not appropriately chosen \cite{gorban2016approximation,li2017insights}. In contrast,  the stochastic configuration networks \cite{wang2017stochastic} can ensure universal {approximation} by enforcing certain constraints on the random assignment of the hidden parameters, rather than using the purely random way as the ``good'' probability distribution $Q^{*}(a; b)$ is unknown and data-dependent. \citet{sonoda2014sampling} proposed the sampling regression learning method by introducing a nonparametric probability distribution of the hidden parameters of SLFNs, and fitting the output parameters via ordinary linear regression. \citet{kleyko2020density} proposed to represent input features of {RVFLs via} density-based encoding, {which is} widely known in the area of stochastic computing, and {used} the operations of binding and bundling from the area of hyper-dimensional computing for obtaining the activations of the hidden neurons. The framework of a broad learning system \cite{chen2017broad} performs in the manner of a flat network, in which the original inputs are transferred and placed as the ``mapped features'' in feature nodes and the structure is expanded in a wide sense in the ``enhancement nodes.''

Although we only pay attention to shallow NNRWs with Step I and Step II (mentioned in the introduction), some other techniques/models using randomness are still worth mentioning here, aiming to present the engaging readers with a big picture of {this} line of research. For instance, the use of randomness in deep neural nets is also concerned in terms of different viewpoints. \citet{mongia2016random} showed that simple single-layer CNNs with random filters could serve as the basis for excellent texture synthesis models.  \citet{saxe2011random} observed that the results of {a} learner based on random weights are comparable to that after regular pre-training and fine-tuning processes. \citet{Giryes2015} showed that under certain conditions, DNNs with random {gaussian} weights could perform a stable embedding of the original data, permitting a stable recovery of the data from the features represented by the network. Reservoir computing, a new paradigm to use recurrent neural networks with fixed and randomly generated weights, has also been widely adopted in-stream data modeling tasks \cite{jaeger2002adaptive,lukovsevivcius2009reservoir,zhang2011nonlinear}. Kernel approximation with random features \cite{rahimi2008random,rahimi2008uniform, rahimi2009weighted} can also be viewed as a random training method as its primary philosophy is mapping the input data to {a} randomized low-dimensional feature space and then applying existing fast linear methods. See the recent survey paper \cite{liu2020random}. On the other hand, random projections are well established and commonly used for dimensionality reduction \cite{boutsidis2014randomized,barbier2020mutual}. Here, one utilizes a random matrix to project input patterns from a high-dimensional space to a lower-dimensional representation such that distances between these patterns are preserved with high accuracy, as stated in Johnson-Lindenstrauss lemma \cite{johnson1984extensions}. 

\section{Integral Representation Theory and Ridgelet Transform}\label{sec:int_rep_theory}
\subsection{Background}
The \emph{ridgelet transform} has been independently discovered by \citet{murata1996integral}, \citet{Candes.HA}, and \citet{Rubin1998} during 1996--1998 as a `harmonic analysis of neural networks'.
This is a path-breaking study, not only in the neural network field, but also in the sparse coding theory (see overviews by \citet{Donoho2002} and by \citet{Starck2010}).
The ridgelet transform has been extended to Schwartz distributions by \citet{Kostadinova2014}, and to non-integrable activation functions such as ReLU by \citet{Sonoda2015acha}.
The \emph{integral representation of a neural network} had been developed before the ridgelet transform. (Recall that the ridgelet transform $R$ is a right inverse operator of the integral representation operator $S$. Thus, we can analyze $S$ without knowing $R$.) For example, \citet{Irie1988}, \citet{Funahashi1989} and \citet{Barron1993} used Fourier transform as an integral representation to prove the UAP. \citet{Carroll.Dickinson} and \citet{Ito1991} used Radon transform. In particular, the so-called \emph{Barron class} (proposed in \citet{Barron1993}) characterizes the functions that neural networks can effectively approximate. The effectiveness here is quantified as \emph{Barron's bound}, a dimension-free approximation upper bound (see the overview by \citet{kainen.survey}).
The original Barron's theory excludes ReLU, and the upper bound is in general not tight. Thus, many authors \citep{Klusowski2018a,Lee2017,Sonoda2019,E2019a,Savarese2019,Ji2020,Ongie2020,Parhi2020} have improved and developed Barron-like theories for ReLU nets.
{It} is notable that \citet{Ongie2020} and \citet{Parhi2020} have employed the Radon transform and developed some representer theorems. The novelty of this study in the integral representation literature is in the estimation of lower bounds.
\subsection{Quick Overview}
We explain the integral representation theory established in \cite{Sonoda2015acha}
showing a few new results.
In order to avoid confusion, we use two symbols $\hat{\cdot}$ and $\cdot^\sharp$ for $m$-dimensional and $1$-dimensional Fourier transforms respectively, namely,
\begin{align*}
    \widehat{f}(\xi) &:= \int_{\RR^m} f(x) e^{-i\xi\cdot x} \dd x, \quad f \in L^2(\RR^m), \xi \in \RR^m; \\
    \sigma^\sharp(\omega) &:= \int_{\RR} \sigma(t) e^{-i\omega t} \dd t, \quad \sigma \in L^2(\RR), \omega \in \RR.
\end{align*}

Let $P$ be a Radon measure on $\RR^m$.
We consider two Hilbert spaces $\F = L^2(P)$ and $\G = L^2(\RR^m\times\RR)$ associated with the following inner products:
\begin{align*}
    \iprod{ f,g }_\F &:= \int_{\RR^m} f(x) \overline{g(x)} \dd P(x), \\
    \iprod{ \mud, \nud }_{\G} &:= \int_{\RR^m \times \RR} \mud(a,b) \overline{\nud(a,b)} \dd a \dd b, 
\end{align*}
and the Banach space $\M$ of the finite Radon measures on $\RR^m\times\RR$ equipped with the total variation norm $\| \cdot \|_{TV}$.

\begin{dfn}[Integral representation $S$]
Fix any function $\sigma:\RR\to\CC$ and measure $\mu$ on $\RR^m\times\RR$, we define the integral representation as
\begin{align*}
    S[\mu](x)
    &:= \int_{\RR^m \times \RR} \sigma(a \cdot x - b ) \dd \mu(a,b), \quad x \in \RR^m. 
\end{align*}
With a slight abuse of notation, when the measure $\mu$ has a density $\phi \in L^1(\RR^m\times\RR)$, we write $S[\phi]$ instead of $S[\phi \dd a \dd b]$.
\end{dfn}

\begin{prop}[Fourier expression of $S$]
\begin{align*}
    S[\mu](x)
    &= \frac{1}{2 \pi} \int_{\RR^m\times\RR} \mu^\sharp(a,\omega) \sigma^\sharp(\omega) \dd \omega e^{i \omega a \cdot x} \dd a.
\end{align*}
\end{prop}
\begin{proof}
Since $\sigma(a \cdot x - b) = \frac{1}{2\pi} \int_\RR \sigma^\sharp(\omega) e^{i\omega(a\cdot x-b)} \dd \omega$,
\begin{align*}
    S[\mu](x)
    &= \frac{1}{2\pi} \int_{\RR^m \times \RR} \sigma^\sharp(\omega) e^{i \omega (a \cdot x - b)} \dd \omega \dd \mu(a,b) \\
    &= \frac{1}{2\pi} \int_{\RR^m \times \RR} \mu{^\sharp}(a,\omega) \sigma^\sharp(\omega) e^{i \omega a \cdot x} \dd \omega \dd a. \qedhere
\end{align*}
\end{proof}

\begin{prop}[Boundedness (Lipschitz continuity) of $S:\M\to\F$]
We write $\sigma_{a,b}(x) := \sigma(a\cdot x - b)$.
Provided that the constant $C_{\sigma,P}^2 := \sup_{a,b} \| \sigma_{a,b} \|_{\F}^2$ exists finite,
then $S:\M\to\F$ is bounded, or equivalently, Lipschitz continuous: For any $\mu \in \M$, we have
\begin{align*}
    \| S[\mu] \|_{\F}^2
    &\le \int \left( \int |\sigma_{a,b}(x)| \dd |\mu|(a,b) \right)^2 \dd P(x)
    \le \| \mu \|_{TV}^2 \int \int |\sigma_{a,b}(x)|^2 \frac{\dd |\mu|(a,b)}{\|\mu\|_{TV}} \dd P(x)\le C_{\sigma,P}^2 \| \mu \|_{TV}^2.
\end{align*}
\end{prop}
The boundedness of $S$ is a sufficient condition for the optimization problem to be well-defined, in the sense that $S(\M) \subset \F$.
Hence, unless otherwise noted, we always assume that $C_{\sigma,P} < \infty$.

\begin{prop}[Adjoint operator $S_P^*$]
For $S:\G\to\F$, the adjoint operator $S_P^*:\F\to\G$ is given by
\begin{align*}
    S_P^*[f](a,b) = \int_{\RR^m} f(x) \overline{\sigma(a \cdot x - b)} \dd P(x).
\end{align*}
If $P$ is obvious from the context, we write $S_P^*$ as $S^*$ for simplicity.
\end{prop}
\begin{proof}
We can verify this by the direct calculation: For any $f \in \F$ and $\mud \in \G$,
\begin{align*}
    \iprod{ f, S[\mud] }_{\F}
    = \int_{\RR^m \times \RR \times \RR^m} f(x) \overline{\sigma(a \cdot x - b )} \overline{\phi(a,b)} \dd a \dd b \dd P(x)
    = \iprod{S_P^*[f],\mud}_{\G},
\end{align*}
as long as one of the integrals exists. 
\end{proof}

\begin{dfn}[Ridgelet transform $R$]
For any measure $P$ on $\RR^m$ and function $\rho:\RR\to\CC$,
we define the \emph{ridgelet transform} of $f$ on $\RR^m$ by
\begin{align*}
    R_P[f;\rho](a,b)
    &:= \int_{\RR^m} f(x) \overline{\rho( a \cdot x - b )} \dd P(x),  (a,b) \in \RR^m \times \RR.
\end{align*}
If $P$ and/or $\rho$ are obvious from the context, we write $R_P[f;\rho]$ as $R[f]$ for simplicity. In addition, when we emphasize the Lebesgue measure case $P=\dd x$, we write $R_{\dd x}$.
\end{dfn}
In particular, the adjoint operator $S_P^*$ is a ridgelet transform: $S_P^*[f] = R_P[f;\sigma]$.

\begin{prop}[Fourier expression of $R$]
\begin{align*}
    R_P[f;\rho](a,b) = \frac{1}{2 \pi} \int_{\RR} \widehat{f \dd P}(\omega a) \overline{\rho^{\sharp}(\omega)} e^{i \omega b} \dd \omega,
\end{align*}
where $\widehat{f \dd P}$ denotes the Fourier transform of the measure $f \dd P$. When $P=\dd x$, then $\widehat{f \dd x}$ is naturally identified with the ordinary Fourier transform $\widehat{f}$.
\end{prop}
\begin{proof}
Since $\rho( a \cdot x - b ) = \frac{1}{2 \pi} \int_\RR \rho^{\sharp}(\omega) e^{i \omega (a \cdot x - b)} \dd \omega$,
\begin{align*}
    R_P[f;\rho](a,b)
    &= \frac{1}{2 \pi} \int_{\RR^m} f(x) \int_{\RR} \overline{\rho^{\sharp}(\omega)} e^{-i \omega (a \cdot x - b)} \dd \omega \dd P(x) \\
    &= \frac{1}{2 \pi} \int_{\RR} \widehat{f\dd P}(\omega a) \overline{\rho^{\sharp}(\omega)} e^{i \omega b} \dd \omega. \qedhere
\end{align*}
\end{proof}


We remark that satisfying this admissible condition is not difficult. For example, take a Gaussian $\rho_0(t) := \exp(-t^2/2)$, and put $\rho_n(t) := C \rho_0^{(n)}(t)$ with an integer $n$ such that $2n-m > 0$ and a positive constant $C$. Then, by appropriately setting $C$, $\rho_n$ can be admissible (with itself) because $(2 \pi)^{m-1} \int_{\RR} |\rho_n^{\sharp}(\omega)|^2/|\omega|^{m} \dd \omega = (2 \pi)^{m-1} C \int_\RR |\omega|^{2n-m} |\rho_0^\sharp(\omega)|^2 \dd \omega < \infty$ and we can set $C$ for the integral to be normalized as $1$.

\begin{prop}[Reconstruction formula]
For any $f \in L^1(P)$, $S[R_P[f;\rho]] = f \dd P$.
\end{prop}
\begin{proof} We write $f_P := f \dd P$ for short.
By using the Fourier expressions, we have
\begin{align*}
    S[R_P[f;\rho]](x)
    &= \frac{1}{2\pi} \int_{\RR^m \times \RR} \widehat{f_P}(\omega a) \sigma^\sharp(\omega) \overline{\rho^\sharp(\omega)} e^{i\omega a \cdot x} \dd \omega \dd a \\
    &= \frac{1}{2 \pi} \int_{\RR^m} \widehat{f_P}(\xi) \left[ \int_\RR \frac{\sigma^\sharp(\omega) \overline{\rho^\sharp(\omega)}}{|\omega|^m} \dd \omega \right] e^{i \xi \cdot x} \dd \xi \\
    &= f_P(x).
\end{align*}
Here, we change the variable $(a,\omega) = (\xi/\omega,\omega)$ with $\dd a\dd \omega = |\omega|^{-m}\dd\xi\dd\omega$ in the second equation.
\end{proof}
We remark that when $\sigma$ is self-admissible,
the reconstruction formula can be extended to $f \in L^2(\RR^m)$ by using the Plancherel formula below.

The following isometries play an important role in the proof of main results as we can regard $|R[f](a,b)|^2$ with a ``density function'' of the parameter distribution.
\begin{prop}[Plancherel formula]\phantom{a}\\
\begin{itemize}
    \item When $\sigma = \rho$, $\| S_P^*[f] \|_{\G}^2 = \iprod{f, f \dd P}_{\F}$ because
    \begin{align*}
    \| S_P^*[f] \|_{\G}^2\!
    = \!\iprod{ S_P^*[f], S_P^*[f] }_{\G}\!
    = \!\iprod{ f, S[S_P^*[f]] }_\F\!
    = \!\iprod{ f, f \dd P }_\F.
    \end{align*}
    \item When $\sigma = \rho$, $\| f \|_{\F}^2=\iprod{S_P^*[f], S_{\dd x}^*[f]}_{\G}$ because
    \begin{align*}
        &\| f \|_{\F}^2
        =\iprod{f, f}_{\F}
        =\iprod{f, S[S_{\dd x}^*[f]]}_{\F}
        =\iprod{S_P^*[f], S_{\dd x}^*[f]}_{\G}.
    \end{align*}
     \item When $f$ is supported in a set $\X\subset\RR^m$ and $P=1_\X \dd x$ (indicator function), then $f\dd P = f$, and thus $S_{\dd x}^*[f] = S_P^*[f]$, and the above two identities coincide:
    $\| S_P^*[f] \|_{\G}^2 = \| f \|_{\F}^2$.
\end{itemize}
\end{prop}

\section{Proofs for Theorems}\label{supp:proofs}
\subsection{\refthm{minimizer}} \label{supp:proof.minimizer}
We impose assumptions as below.
\begin{itemize}
    \item[(A1)] Let $\Omega$ be a bounded open subset with smooth boundary in the input domain $\RR^m$, and put $K := \overline{\Omega}$. Namely, $K$ is a compact set. The boundedness assumption is required for the loss $\| f - g_d \|_{L^2(K)}$ between $f$ and \emph{finite net} $g_\num(x) = \sum_{i=1}^\num c_i \sigma(a_i\cdot x - b_i)$ exists finite. We note that  $\| g_\num \|_{L^2(\RR^m)} = \infty$ simply because $\sigma(a \cdot x - b)$ has a constant direction, while $\| g_\num \|_{L^2(K)} < \infty$. 
    The closure assumption excludes degenerated cases such as $K=\{ x_1, \ldots, x_n \}$ (isolated points) for the sake of simplicity. The smooth boundary is required in \refthm{ubound}, to continuously embed $H^s(\Omega)$ to $H^s(\RR^m)$ via zero-extension.
    \item[(A2)] Let $f : \RR^m \to \CC$ be an square-integrable function supported in the compact set $K$. Namely, $f \in L^2(K)$. Both integrability and compact-support assumptions exclude the so-called ``teacher-student setting'' where $f$ is a finite neural network such as $\sum_{i=1}^d c_i \sigma(a_i \cdot x - b_i)$. \label{assm:f}
    \item[(A3)] $P := 1_K \dd x$ (i.e., the volume is not normalized to $1$), which yields $\F = L^2(P) = L^2(K)$ and $S^*[f]=R[f;\sigma,1_K\dd x] = S_K^*[f]$.
    \item[(A4)] $C_{\sigma,P}$ exists finite, namely $\| S[\mu] \|_\F \le C_{\sigma,P} \| \mu \|_{TV}$ so that $S(\M) \subset \F$.
    \item[(A5)] Let $\sigma:\RR\to\CC$ be a measurable function that is admissible with itself.
\end{itemize}
Then, the approximation error is lower bounded by the volume of the tail part (i.e., outside the parameter domain $V$) of the ridgelet spectrum:
\begin{align*}
    \inf_{\mu \in \M(V)} \| f - S[\mu] \|_{L^2(K)}^2
    \ge \|f\|_{L^2(K)}^2 - \| S[ S_K^*[f]|_V ] \|_{L^2(K)}^2
    \ge
        \|f\|_{L^2(K)}^2 - \| S_K^*[f] \|_{L^2(V)}^2
    = \| S_K^*[f] \|_{L^2(V^c)}^2.
\end{align*}
Obviously, the lower bound is strictly positive when the tail density $S^*[f]|_{V^c}$ is positive. 

\begin{proof}
We write $\G := L^2(\RR^m\times\RR)$ for short.
By $S^*[f]|_V$ (resp. $S^*[f]|_{V^c} = S^*[f] - S^*[f]|_V$) we write the truncation of the ridgelet spectum $S^*[f]$ onto $V$ (resp. $V^c$). By $\proj_{\ker S}$ (resp. $\proj_{(\ker S)^\perp}$) we write the projection from $\M(V)$ to the null space $\ker S$ (resp. to its complement $(\ker S)^\perp$).

\textbf{Step~1.}
Let $\mu^*$ denote an arbitrary single element of the minimizers in $\M(V)$.
We note that $\mu^*$ always exists as a consequence of the following \emph{extreme value theorem}: 
\begin{prop}
Suppose $E$ be a Banach space, $X$ be a closed convex subset of $E$, and $\varphi:X\to(\infty,\infty]$ be a coercive lower semi-continuous function. (Here, coercive means $\varphi(x) \to +\infty$ as $\| x \|_E \to \infty$.) Then, there exists an element (minimizer) $x^* \in X$ that attains the minimum, i.e., $\inf_{x \in X} \varphi(x) = \varphi(x^*)$.
\end{prop}
Now $E=X=\M(V)$ is a Banach space (known as an \emph{rca space}), which means it is closed and convex, and $\varphi(\mu) := \| f - S[\mu] \|_{L^2(K)}^2$ is coercive and Lipschitz continuous, there exists a minimizer $\mu^* \in \M(V)$.
Namely, we have
\begin{align}
    \inf_{\mu \in \M(V)} \| f - S[\mu] \|_{L^2(K)}^2
    &= \| f - S[\mu^*] \|_{L^2(K)}^2. \label{eq:minimizer}
\end{align}

Since the minimizer $S[\mu^*]$ satisfies the Pythagorean relation:
\begin{align}
    \| S[\mu^*] \|_{L^2(K)}^2 + \| f - S[\mu^*] \|_{L^2(K)}^2 = \| f \|_{L^2(K)}^2, \label{eq:pythagorean}
\end{align}
we have
\begin{align}
\eqref{eq:minimizer}
     = \| f \|_{L^2(K)}^2 - \| S[\mu^*] \|_{L^2(K)}^2. \label{eq:minimizer2}
\end{align}

\textbf{Step~2.}
We show the following inequality:
\begin{align}
    \| S[\mu^*] \|_{L^2(K)} \le \| S[ S^*[f]|_V ] \|_{L^2(K)}, \label{eq:key.ineq1}
\end{align}
which yields the following lower bound:
\begin{align}
\eqref{eq:minimizer2}
    &\ge \|f\|_{L^2(K)}^2 - \| S[ S^*[f]|_V ] \|_{L^2(K)}^2 \label{eq:proof.lb1}
\end{align}
\begin{proof}[Proof of \emph{(\ref{eq:key.ineq1})}]
To estimate the norm of $S[\mu^*]$, we can neglect the null component of $\mu^*$, say $\mu_0^* \in \ker (S:\M(V)\to L^2(K))$, since it satisfies
\begin{align}
    \|S[\mu^*]\|_{L^2(K)} = \|S[\mu^* - \mu_0^*]\|_{L^2(K)}, \quad \mbox{and} \quad
    \iprod{f, S[\mu^*]}_{L^2(K)} = \iprod{S_K^*[f], \mu^*-\mu_0^*}_{L^2(\RR^m\times\RR)},
\end{align}
for any $f \in L^2(K)$.
The Pythagorean relation \eqref{eq:pythagorean} is rephrased as
\begin{align}
     \| S[\mu^*] \|_{L^2(K)}^2
     &= \Re \iprod{ f, S[\mu^*] }_{L^2(K)}. \label{eq:norm.iprod1}
     \intertext{Since $\mu^*$ is supported in $V$,}
    &= \Re \iprod{ S^*_K[f] |_V, \mu^* }_{L^2(\RR^m\times\RR)}.\\
    \intertext{Since $\mu^*$ is assumed not to contain the null component,}
    &= \Re \iprod{ \proj_{(\ker S)^\perp}[S^*_K[f] |_V], S^*_K[ S[\mu^*]] }_{L^2(\RR^m\times\RR)},\\
    \intertext{By the definition of $S_K^*$,}
    &=\Re \iprod{ S[ S^*[f]|_V ], S[ \mu^* ] }_{L^2(K)}.
    \intertext{By the Cauchy-Schwartz inequality,}
    &\le \| S[ S^*[f]|_V ] \|_{L^2(K)} \|S[ \mu^* ] \|_{L^2(K)},
\end{align}
which yields the inequality (\ref{eq:key.ineq1}).
\end{proof}

\textbf{Step~3.}
Next, we show the following equalities:
\begin{align}
    \| f \|_{L^2(K)}^2
    &= \| S_K^*[f] \|_{L^2(V)}^2 + \| S_K^*[f] \|_{L^2(V^c)}^2 \notag \\
    &= \| S[ S_K^*[f]|_V ] \|_{L^2(K)}^2 + \| S[S_K^*[f]|_{V^c}] \|_{L^2(K)}^2+ 2 \| \phi_0 \|_{L^2(\RR^m\times\RR)}^2, \label{eq:key.eq1}
\end{align}
where $\phi_0$ is the null component of $S_K^*[f]|_V$ defined later,
and this equality refines the lower bound as
\begin{align*}
    (\ref{eq:proof.lb1})
    &= \| S[ S_K^*[f]|_{V^c} ] \|_{L^2(K)}^2 + 2\| \phi_0 \|_{L^2(\RR^m\times\RR)}^2
    \ge \| S^*[f] \|_{L^2(V^c)}^2 = \| f \|_{L^2(K)}^2 - \| S^*[f] \|_{L^2(V)}^2.
\end{align*}
\emph{Proof of}(\ref{eq:key.eq1}). By using the Plancherel formula and splitting the integral, the restrictions $S^*[f]|_V$ and $S^*[f]|_{V^c} (=S^*[f] - S^*[f]|_V)$ of ridgelet spectra $S^*[f]$ satisfy the following equation:
\begin{align}
    \| f \|_{L^2(K)}^2
    &= \| S_K^*[f] \|_{L^2(\RR^m\times\RR)}^2 \nonumber\\
    &= \left( \int_V + \int_{V^c} \right) |S_K^*[f](a,b)|^2 \dd a \dd b = \| S_K^*[f] \|_{L^2(V)}^2 + \| S_K^*[f] \|_{L^2(V^c)}^2. \label{eq:key.eq2}
\end{align}


In order to further decompose the equation (\ref{eq:key.eq2}), we consider the null components of the restrictions $S_K^*[f]|_V$ and $S_K^*[f]|_{V^c}$.
Recall that the operator $S:{L^2(\RR^m\times\RR)}\to{L^2(K)}$ has a non-trivial null space $\ker (S:{L^2(\RR^m\times\RR)}\to{L^2(K)})$, and its orthogonal complement is given by the image space $\im S_K^*$ of the adjoint operator $S_K^*:{L^2(K)}\to{L^2(\RR^m\times\RR)}$, namely, $(\ker S)^\perp = \im S_K^*$.
Hence, the entire space $\G := L^2(\RR^m\times\RR)$ is decomposed into the orthogonal direct sum: $\G = \ker S \oplus \im S_K^*$.
By definition, $S_K^*[f] \in \im S_K^* = (\ker S)^\perp$. Nevertheless, its restrictions $S_K^*[f]|_V$ and $S_K^*[f]|_{V^c}$ may have null components.
We write the (potentially non-trivial) null component of $S_K^*$ as $\phi_0 := \proj_{\ker S}[ S_K^*[f]|_V ]$, and its orthogonal component as $\phi_V := \proj_{(\ker S)^\perp}[ S_K^*[f]|_V ]$, so that both components become a direct sum: $\phi_V \oplus \phi_0 = S_K^*[f]|_V$.
Then, the null component of $S_K^*[f]|_{V^c}$ is $-\phi_0$ because the sum $S_K^*[f]|_V + S_K^*[f]|_{V^c} = S_K^*[f]$ is in the image space $\im S_K^*$,
and thus the orthogonal component $\phi_{V^c} := \proj_{(\ker S)^\perp}[ S_K^*[f]|_{V^c} ]$ is given by $\phi_{V^c} = S_K^*[f]|_{V^c} + \phi_0$.
Hence, by using the orthogonality and the Plancherel formula, the equation (\ref{eq:key.eq2}) is further calculated as follows:
\begin{align*}
    (\ref{eq:key.eq2})
    &= \| \phi_V \|_{L^2(\RR^m\times\RR)}^2 + \| \phi_{V^c} \|_{L^2(\RR^m\times\RR)}^2 + 2 \| \phi_0 \|_{L^2(\RR^m\times\RR)}^2 \\
    &= \| S[ S_K^*[f]|_V ] \|_{L^2(K)}^2+ \| S[S_K^*[f]|_{V^c}] \|_{L^2(K)}^2 + 2 \| \phi_0 \|_{L^2(\RR^m\times\RR)}^2.
\end{align*}
Combining Steps 1, 2, and 3, we have the assertion.
\end{proof}

\subsection{\refthm{ubound}}\label{supp:proof.ubound}
We write $\iprod{x} := (1+|x|^2)^{1/2}$ for $x \in \RR^m$, which satisfies $\max\{1,|x|\} \le \iprod{x}$ for any $x$. 
For square-integrable functions $f \in L^2(\RR^m)$ on whole space $\RR^m$, we employ $\| f \|_{H^s(\RR^m)}^2 := \int_{\RR^m} |\widehat{f}(\xi)|^2 (1+|\xi|^2)^s \dd \xi$ for the $L^2$-Sobolev norm of order  $s \in \RR$.
For functions on an open subset $\Omega$ with $C^1$-boundary, we define the $L^2$-Sobolev space $H^s(\Omega)$ with $s \in (1/2,\infty]$ by continuously embedding it to $H^s(\RR^m)$. Namely, we identify $f \in H^s(\Omega)$ with $\overline{f} \in H^s(\RR^m)$ that is compactly supported in $\Omega$ and satisfies $\overline{f}|_\Omega = f$.

\textbf{Decay property.}
Suppose that $\rho$ is self-admissible, namely, $\int_{\RR} |\rho^\sharp(\omega)|^2|\omega|^{-m}\dd \omega = (2\pi)^{m-1}$.
For any $f \in H^s(\Omega)$,
\begin{align*}
    | R[f](ru,b) |
    &\le \frac{1}{2\pi} \int_{\RR} | \widehat{f}(\omega u) | |\rho^{\sharp}(\omega/r)/r| \dd \omega\\
    &\le \frac{1}{2\pi} \int_{\RR} \left( |\omega u|^s |\widehat{f}(\omega u) \omega^{\frac{m-1}{2}}| \right) \left( |\omega^{-(2s+m-1)/2} \rho^{\sharp}(\omega/r)/r| \right) \dd \omega \\
    &\le \frac{1}{2\pi} \left(2 \int_0^\infty \iprod{\omega u}^{2s} |\widehat{f}(\omega u)|^2 \omega^{m-1} \dd \omega\right)^{1/2} \left( \!|r|^{-2s-m}\!\! \int_{\RR}  \iprod{\omega}^{-2s+1} |\rho^{\sharp}(\omega)|^2 |\omega|^{-m} \dd \omega \!\!\right)^{\frac{1}{2}} \\
    &= C_{\rho,s} |r|^{-(2s+m)/2} \Phi_s[f](u).
\end{align*}
Here, we write $\Phi_s[f](u) := \left(\int_0^\infty \iprod{\omega u}^{2s} |\widehat{f}(\omega u)|^2 \omega^{m-1} \dd \omega\right)^{1/2}$ for future use, of which the spherical mean becomes the Sobolev norm:
\begin{align*}
    \int_{\Sph^{m-1}} \Phi_s[f](u)^2 \dd u = \| f \|_{H^s}^2;
\end{align*}
and the constant $C_{\rho,s}$ is given and bounded as
\begin{align*}
C_{\rho,s}^2
&:= \frac{2}{(2 \pi)^2} \int_{\RR} \iprod{\omega}^{-2s+1} \frac{|\rho^{\sharp}(\omega)|^2}{|\omega|^m} \dd \omega\le \frac{2}{(2 \pi)^2} \int_{\RR} \frac{|\rho^{\sharp}(\omega)|^2}{|\omega|^m} \dd \omega = 2(2\pi)^{m-3} < \infty,
\end{align*}
because $\iprod{\omega}^{-2s+1} \le 1$ as long as $-2s+1 \le 1$.

\textbf{Auxiliary estimates.} The obtained estimate does not depend on $b$ and diverges at $r=0$, but $R[f]$ usually depends on $b$ and does not always diverge at $r=0$. Hence, we derive auxiliary estimates.
By the assumption that $f \in L^1(K)$ and $\rho \in L^\infty(\RR)$, we have
\begin{align*}
    | R[f](a,b) |
    &\le \int_{\RR^m} |f(x)| |\rho(a \cdot x - b)| \dd x \le \| f \|_{L^1(K)} \| \rho(a \cdot x - b) \|_{L^\infty(K)}
    \le \| f \|_{L^1(K)} \| \rho \|_{L^\infty(\RR)}.
\end{align*}
Therefore, put
\begin{align*}
    \phi_a(ru) := \min\left\{ \|\rho\|_{L^\infty(\RR)}\|f\|_{L^1(K)}, C_{\rho,s} \Phi_s[f](u) r^{\frac{-2s-m}{2}} \right\}, \:\:\:
    \phi_b(b) := \frac{|R[f](ru,b)|}{\phi_a(ru)}.
\end{align*}
Since the estimate $|R[f](ru,b)| \le \phi_a(ru)$ is independent of $b$, $\phi_b$ is well-defined and uniformly bounded as $|\phi_b| \le 1$.
By the square integrability of $R[f]$, we can decompose the integral as
\begin{align*}
    \| R[f] \|_{L^2(\RR^m\times\RR)}^2 = \int_{\RR^m} |\phi_a(a)|^2 \dd a \int_\RR |\phi_b(b)|^2 \dd b.
\end{align*}

\subsection{\refthm{mainclaim}}\label{supp:proof.mainclaim}

\begin{proof}
We write $C_0 := \|\sigma\|_{L^\infty(\RR)} \|f\|_{L^1(K)}$ and $C_\infty := C_{\sigma,s}\|f\|_{H^s(\Omega)}$ for short. Let $V_a := \{ a \in \RR^m \mid |a| \le \lambda \}$ and $V_b := \{b \in \RR \mid |b| \le \kappa\}$ so that $V = V_a \times V_b$.
By \refthm{minimizer},
 the approximation error $\inf_{\mu \in \M (p)}\| f - S[\mu] \|_{L^2(K)}^2$ is lower bounded by the tail bound $\| S_K^*[f]\|_{L^2(V^c)}^2 = \| f \|_{L^2(K)}^2 - \| S^*[f] \|_{L^2(V)}^2$.
On the other hand, by \refthm{ubound},
the parameter ``density'' $|S_K^*[f]|^2$ is upper bounded by a dominating function $|\phi_a|^2$;
Furthermore, the integration of $|S_K^*[f]|^2$ over a product space $V_a \times V_b$ is exactly decomposed into the integrations of $|\phi_a|^2$ and $|\phi_b|^2$.
In the following, by integrating the dominating function over the bandlimited domain $V$,
we estimate the tail bound. 

We begin with decomposing the integral as
\begin{align*}
    \| S_K^*[f] \|_{L^2(V)}^2
    &\!= \!\left(\!\int_{\Sph^{m-1}} \!\int_0^\lambda |\phi_a(ru)|^2 r^{m-1} \dd r \dd u \!\right) \!\int_{V_b} \!\!|\phi_b(b)|^2 \dd b = \| \phi_a \|_{L^2(V_a)}^2 \| \phi_b \|_{L^2(V_b)}^2.
\end{align*}
Thus, we compute $\| \phi_a \|_{L^2(V_a)}^2$ in the following.

By averaging $\phi_a$ in direction $u \in \Sph^{m-1}$,
\begin{align*}
    \int_{\Sph^{m-1}} | \phi_a(ru) |^2 \dd u
    &= \min \{ C_0^2 \Omega_{m-1}, C_\infty^2 r^{-2s-m} \}.
\end{align*}
Here, $\Omega_{m-1} = 2 \pi^{m-1}/\Gamma(m/2)$ is the surface area of $\Sph^{m-1}$.
Therefore, the rate in $r$ changes at the cross point $r = \vartheta$ satisfying $C_0^2 \Omega_{m-1} = C_\infty^2 \vartheta^{-2s-m}$.

Let us consider the case $\lambda \le \vartheta$. 
Then,
\begin{align*}
    \| \phi_a \|_{L^2(V_a)}^2
    &= \left(\int_{\Sph^{m-1}} \int_0^\lambda |\phi_a(ru)|^2 r^{m-1} \dd r \dd u \right)= C_0^2 V_m \lambda^m =: I_0(\lambda).
\end{align*}
Here, $V_m = \pi^{m/2}/\Gamma(m/2+1)$ is the volume of $m$-unit ball, and we used the relation $\Omega_{m-1}/m = V_m$.
Next, let us consider the case $\lambda \ge \vartheta$. 
\begin{align*}
    \| \phi_a \|_{L^2(V_a)}^2
    &= I_0(\vartheta) + \int_{\Sph^{m-1}} \int_\vartheta^\lambda |\phi_a(ru)|^2 r^{m-1} \dd r \dd u \\
    &= C_0^2 V_m \vartheta^m - \frac{C_\infty^2}{2s}\left( \lambda^{-2s} - \vartheta^{-2s} \right) \\
    &= \frac{C_\infty^2}{2s} \left( -\lambda^{-2s} + \frac{2s + m}{m} \vartheta^{-2s}  \right),
\end{align*}
where the final equation is immediate from the relation $m C_0^2 V_m \vartheta^m = C_\infty^2 \vartheta^{-2s}$. By the positivity of integrand $|\phi_a|^2$, the final estimate is also positive (inspite of the negative term $-\lambda^{-2s}$).

As a biproduct, by letting $\lambda \to \infty$, we can verify that both $\phi_a$ and $\phi_b$ are finite measures on $\RR^m$ and $\RR$ respectively:
\begin{align*}
    &\| \phi_a \|_{L^2(\RR^m)}^2 = C_\infty^2 \left( \frac{2s+m}{2sm} \right) \vartheta^{-2s} \in (0,\infty), \\  &\implies \| \phi_b \|_{L^2(V_b)}^2 = \| S^*[f] \|_{L^2(\RR^m\times V_b)}^2 / \| \phi_a \|_{L^2(\RR^m)}^2 \in (0,\infty).
\end{align*}

To conclude, we have the following approximation lower bound:
\begin{align*}
    &\inf_{\mu \in \M(\num)} \| f - S[\mu] \|_{L^2(K)}^2 \\
    &\ge \inf_{\mu \in \M(V)} \| f - S[\mu] \|_{L^2(K)}^2 \\
    &\ge \| S_K^*[f]  \|_{L^2(V^c)}^2\\
    &= \| f \|_{L^2(K)}^2 - \| S^*[f] \|_{L^2(V)}^2 \\
    &= \|f\|_{L^2(K)}^2
    - \| \phi_b \|_{L^2(V_b)}^2 \cdot\begin{cases}
    \| f \|_{L^1(K)}^2 \| \sigma \|_{L^\infty(\RR)}^2 V_m \lambda^m & \lambda \in [0,\vartheta)\\
     \| f \|_{H^s(\Omega)}^2 C_{\sigma,s}^2 \left( -\frac{1}{2s}\lambda^{-2s}  + \frac{2s + m}{2sm} \vartheta^{-2s}  \right) & \lambda \in [\vartheta, \infty) \\
    \end{cases},
\end{align*}
where the final bound is continuous at $\lambda = \vartheta$, and it is non-negative.
\end{proof}

\section{Further Experiments}\label{supp:furtherexp}
\subsection{Simulations on a 2D artificial example }\label{supp:furtherexp-2D}
To further verify our results, we extend the 1D target function to a 2D case, which is expressed as follows:
\begin{equation*}
f_{2D}(x_1,x_2;\sigma) \!= \!0.2\exp\!\left(\!\!-\frac{(x_1-0.4)^{2}+(x_2-0.4)^{2}}{\sigma^2}\!\right)+0.5\exp\!\left(\!\!-\frac{(x_1-0.6)^{2}+(x_2-0.6)^{2}}{\sigma^2}\!\right),
\end{equation*}
where $x_1\in[0,1]$, $x_2\in[0,1]$,${\sigma}>0$ is a scalar index that can determine the complexity of $f_{2D}$, similar as the 1D case.

Similar to Simulation 2 (1D case) as detailed in Section \ref{sec4}, we create different forms of target function $f(x_1,x_2;\sigma)$ by choosing $\sigma$ as one element of the set $\{0.01,0.05,0.1,0.5\}$, and for each regression task we build random nets with $\lambda$ taken as an element from the set $\{0.1,0.5,1,5,10,50,100,200\}$, and fix the number of hidden nodes as $L=10000$ for each case. We sample 10000 instances $\{(x_{1}^{(i)},x_{2}^{(i)}),f_{2D}(x_{1}^{(i)},x_{2}^{(i)})\}_{i=1}^{10000}$ which are meshgrid points on $[0,1]^2$ (both $x_1$ and $x_2$ are equally space points over [0,1]), then randomly and uniformly select 5000 training samples and 5000 test samples. 

For each pair ($\lambda,\sigma$), we run independently 50 trials and calculate the relative training error for each trial. The following Table \ref{table:2D_case} shows the averaged training performance for the case of each pair ($\lambda,\sigma$). 

\begin{table}[htbp!]
\caption{Summary of mean relative training error for various choices of ($\lambda,\sigma$) for the 2D case.}
\label{table:2D_case}
\vskip 0.1in
\begin{center}
\begin{tabular}{lcccc}
\hline
\multirow{2}{*}{$\lambda$}      & \multicolumn{4}{c}{Averaged Relative Training Error $E$} \\
\cline{2-5} & $\sigma=0.01$                     & $\sigma=0.05$                    & $\sigma=0.1$         & $\sigma=0.5$              \\
 \hline
$\lambda=0.1$                            & 0.0310                               &  0.0225                               &  0.0121          &
0.0062
\\
$\lambda=0.5$                            & 0.0297                               &  0.0214                               &  0.0086          &
0.0041
\\
$\lambda=1$                              & 0.0296                               &  0.0210                               &  0.0072          &
0.0016
\\
$\lambda=5$                              & 0.0277                               &  0.0032                               &  0.0012          &
2.8661e-04
\\
$\lambda=10$                             & 0.0192                               &  0.0011                               &  3.4762e-05         &   2.1093e-05                      \\
$\lambda=50$                             & 0.0010                               & 6.3672e-05        &  6.1358e-05         &  5.3784e-05                            \\
$\lambda=100$                            & 1.2561e-04       & 4.2462e-05        & 5.1378e-05           & 5.3165e-05                           \\
$\lambda=200$                        & 1.1762e-04       & 3.2672e-05        & 2.6826e-05           & 2.3018e-05                             \\
\hline
\end{tabular}
\end{center}
\vskip -0.1in
\end{table}

It is clear that similar findings can be seen from Table \ref{table:2D_case}, that is, consistent with the conclusion drwan from Table \ref{table:1}, there exists an appropriate range/distribution $\mathcal{D}^{*}$, but \textbf{NOT ANY} range/distribution, such that a neural network with random weights (NNRWs) assigned from $\mathcal{D}^{*}$ can be a universal approximator. Essentially, the $\mathcal{D}^{*}$ (e.g, $[-\lambda^*,\lambda^*]^2$) is highly dependent upon the complexity of the target function, as consistent with the theoretical and empirical results elaborated in \cite{li2017insights}.

\subsection{Simulations on five real-world datasets}\label{supp:furtherexp-real}
Also, we conduct another simulation study on five real-world datasets from KEEL-dataset repository for regression task (\url{https://sci2s.ugr.es/keel/}). The basic information of these datasets is summarized in Table \ref{table:realdatasets}. We choose randomly 75$\%$ samples as traning set while the left samples for testing set. Similar as the experiments conducted on 1D and 2D artificial examples presented before, we also consider different settings of $\lambda$ for each dataset, and fix $L=10000$ for the neural network with random weights. Then, we run independently 50 trials and calculate the relative training error for each trial. The following Table \ref{table:Performance_realdatasets} shows the averaged training performance for the case of each pair ($\lambda,\sigma$). 

\begin{table}[htbp!]
\caption{Summary of basic information of five real-world datasets}\label{table:realdatasets}
\begin{center}
\begin{tabular}{cccc}
\hline
Dataset  & Number of Samples & Input Dimension &Output Dimension\\
\hline
stock    & 950               & 9                &1 \\
laser    & 993               & 4                &1  \\
friedman & 1200              & 5                 &1 \\
abalone  & 4177              & 8   &1 \\
compactiv & 8192  & 21 &1\\
\hline
\end{tabular}
\end{center}
\end{table}

\begin{table}[htbp!]
\caption{Summary of mean relative training error for various choices of ($\lambda,\sigma$) for real-world datasets.}
\label{table:Performance_realdatasets}
\vskip 0.1in
\begin{center}
\begin{tabular}{cccccc}
\hline
\multirow{2}{*}{$\lambda$}      & \multicolumn{4}{c}{Averaged Relative Training Error $E$} \\
\cline{2-6} & stock                     & laser                    & friedman        & abalone      &   compactiv     \\
 \hline
$\lambda=0.1$                            & 0.0065                               &  0.0131                              &  0.0314          &
0.0654 &0.0145
\\
$\lambda=0.5$                            & 1.4295e-09                               &  0.0111                               &  0.0027          &
0.0468&0.0029
\\
$\lambda=1$                              & 1.0003e-10                               &  0.0103                               &  1.1831e-09          &
0.0300&9.8862e-04
\\
$\lambda=5$                              & 2.2323e-13                               &  5.8883e-04                               &  1.8764e-13          &
3.5419e-09&3.0985e-08
\\
$\lambda=10$                             & 2.6994e-14                               &  4.2153e-10                               &  9.1391e-15         &   9.7869e-11              &5.8270e-09        \\
$\lambda=50$                             & 3.0819e-15                               & 4.4011e-14        & 3.1680e-15        &  1.5955e-13                           &2.3818e-10 \\
$\lambda=100$                            & 2.8372e-15       & 8.9380e-15        & 2.9847e-15           & 6.2920e-14                           &5.3666e-11 \\
$\lambda=200$                        & 5.2748e-15       & 2.9870e-15        & 3.1563e-15           & 1.6981e-14                            &3.0773e-11 \\
\hline
\end{tabular}
\end{center}
\vskip -0.1in
\end{table}
It clearly shows that there are a few cases (like $\lambda=0.1,0.5,1$) when the training errors of the randomized neural networks cannot converge to zero (even when $L=10000$). This finding is also consist with what we have obtained in the 1D and 2D artificial examples. All these findings validate our theoretical results that when hidden parameters are distributed in
a bounded domain, the network may not achieve zero approximation error.

\subsection{Quantitative demonstration for Figure 1}\label{Quantitative}
Consider the following two toy examples:
\begin{equation*}
f_1(x) = 0.2e^{-(10x-4)^{2}}+0.5e^{-(80x-40)^{2}}+0.3e^{-(80x-20)^{2}}, \:x\in[0,1],
\end{equation*}
and
\begin{equation*}
f_2(x) = 0.8\exp(-0.2x)\sin(10x), \:x\in[0,5].
\end{equation*}
We uniformly sample 1000 training samples (with $x\in [0,1]$ for $f_1$, $x\in [0,5]$ for $f_2$, respectively). For the neural networks with random weights (NNRWs), we fix the number of hidden nodes $L=10000$ (so that we can observe the trend as $L \rightarrow\infty$) and try the randomized learner model using different setting of the random distribution, e.g., $\lambda=[0.1,0.2,0.3,0.4,0.5,0.6,0.7,0.8,0.9,1,5,10,15,20,25,30,40,50]$. As shown clearly in \reffig{appendix_figure} (similar to the qualitative plot shown in \reffig{lbound}), for both toy examples, when $\lambda=[0.1,0.2,0.3,0.4,0.5,0.6,0.7,0.8,0.9,1]$ the approximation error change is relatively flatten, while for $\lambda=[5,10,15,20,25,30,40,50]$ the magnitude of the approximation error decreasing become much larger. Although it is intuitively seen that the threshold value $\vartheta$ is `roughly' around 1 for both $f_1$ and $f_2$, it is not easy to find the `optimal' value of $\vartheta$. Given limited training samples (sampled from an unknown function), how to develop advanced algorithms/strategies to compute numerically the threshold $\vartheta$ is out of the focus of our current work. Nonetheless, it is expected to benefit and motivate future research on algorithm development for building more powerful (shallow and/or deep) neural nets with random weights (NNRWs).

\begin{figure}[htbp!]
\centering
\subfigure[$f_1$]{\includegraphics[width=0.45\textwidth]{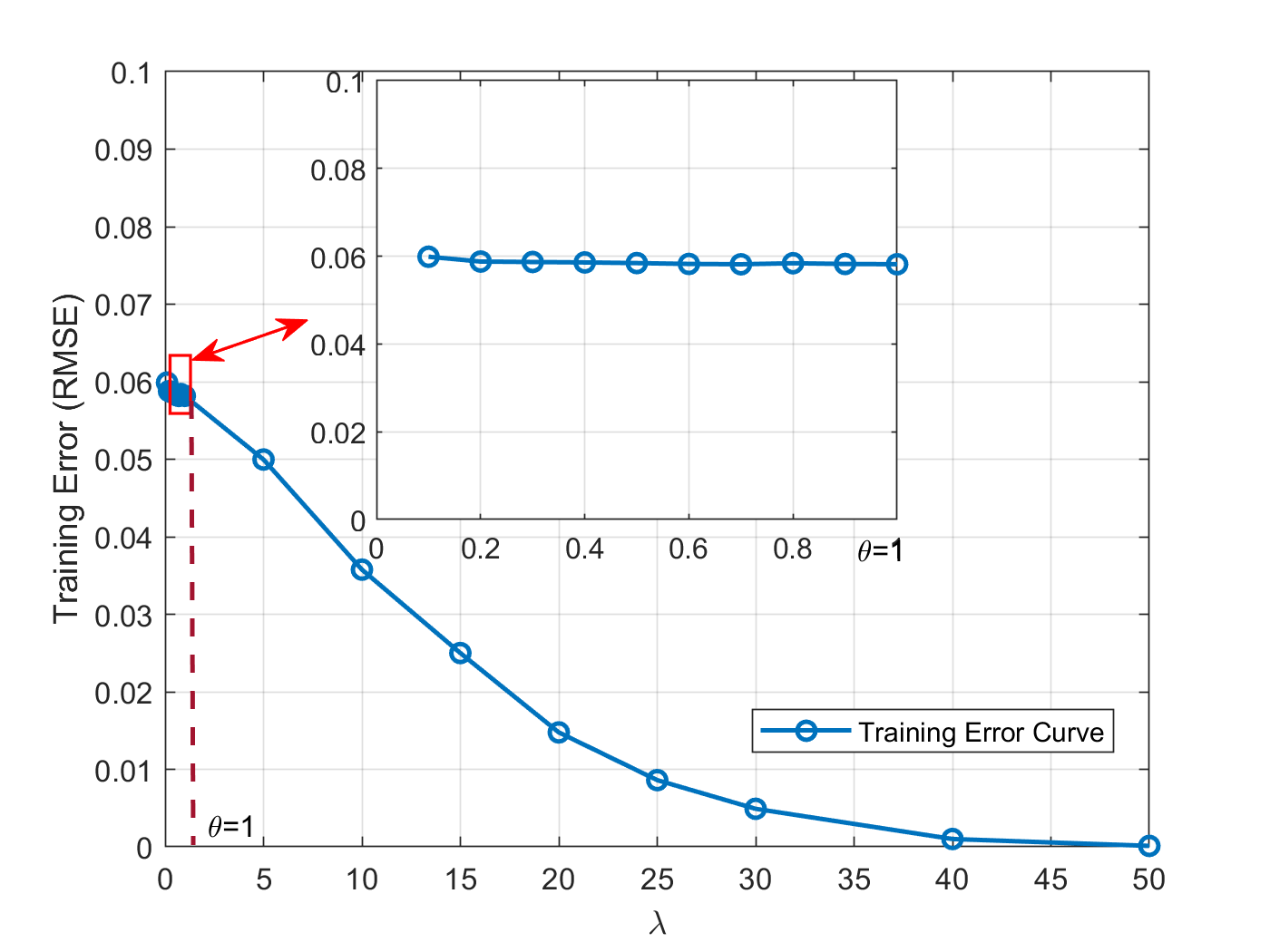}}
\subfigure[$f_2$]{\includegraphics[width=0.45\textwidth]{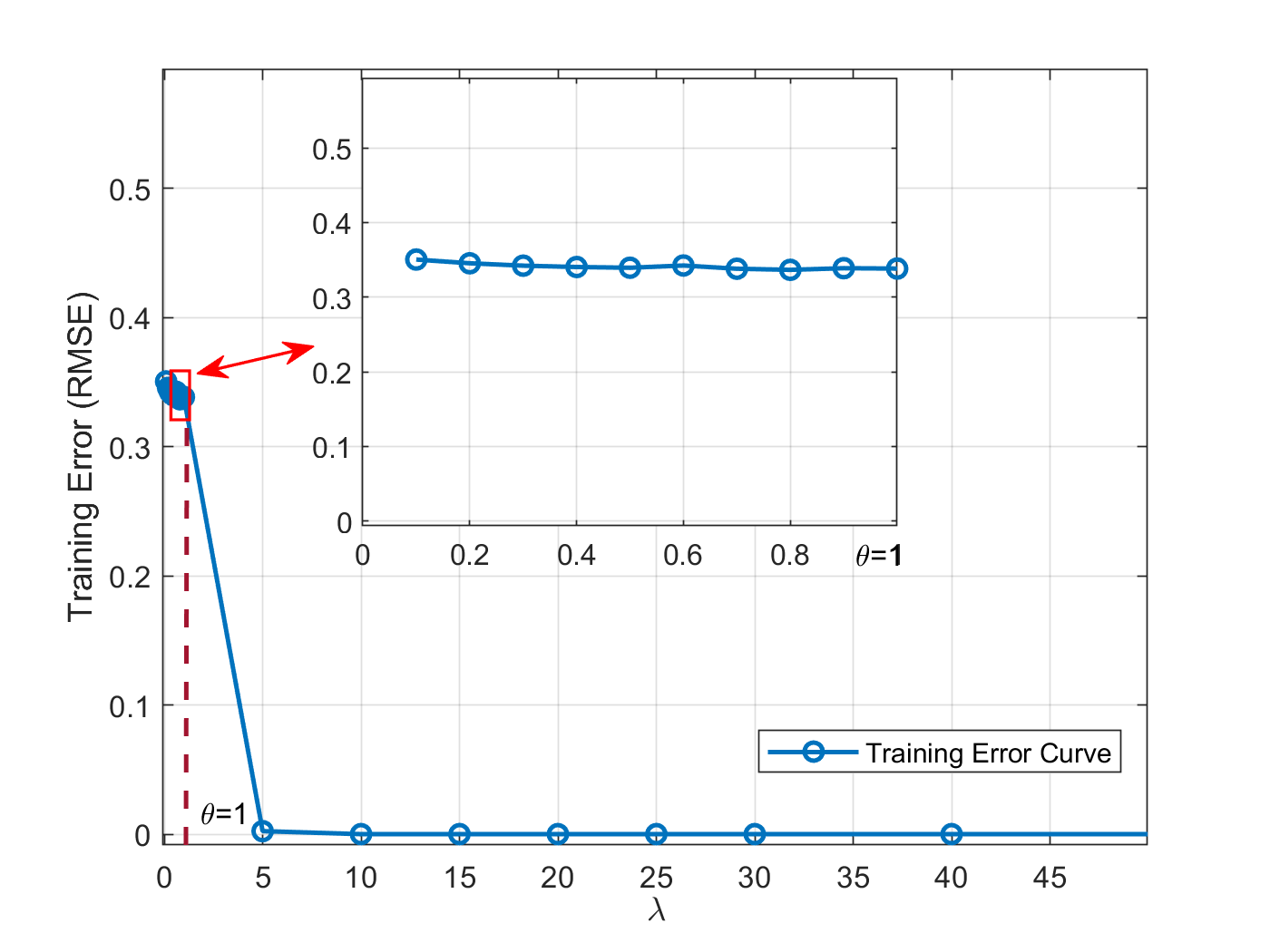}}
\caption{Outline of the approximation lower bound for $f_1$ and $f_2$.}\label{fig:appendix_figure}
\end{figure}
\end{document}